\documentclass[12pt]{my_class}

% The following packages will be automatically loaded:
% amsmath, amssymb, natbib, graphicx, url, algorithm2e

%%%%%%%%%%%%%%%%%%%%%%%%%%%%%%%%%%%%%%%%%%%%%%%%%%%%%%%%%%%%%%
% For :=
\usepackage{mathtools}

\usepackage{dsfont}

% Ceil and floor
\usepackage{mathtools}
\DeclarePairedDelimiter\ceil{\lceil}{\rceil}
\DeclarePairedDelimiter\floor{\lfloor}{\rfloor}

% Argmax
\DeclareMathOperator*{\argmax}{argmax}

% Assumption environment
\newtheorem{assumption}[theorem]{Assumption}

% professional-quality tables
\usepackage{booktabs}  

% Wrap figure
\usepackage{wrapfig}

% Macro for Greedy
\newcommand{\Greedy}{\textsc{Greedy}}
%%%%%%%%%%%%%%%%%%%%%%%%%%%%%%%%%%%%%%%%%%%%%%%%%%%%%%%%%%%%%%

%\title[Short Title]{Full Title of Article}
\title[Be Greedy in Multi-Armed Bandits]{Be Greedy in Multi-Armed Bandits}
\usepackage{times}
% Use \Name{Author Name} to specify the name.
% If the surname contains spaces, enclose the surname
% in braces, e.g. \Name{John {Smith Jones}} similarly
% if the name has a "von" part, e.g \Name{Jane {de Winter}}.
% If the first letter in the forenames is a diacritic
% enclose the diacritic in braces, e.g. \Name{{\'E}louise Smith}

% Two authors with the same address
% \coltauthor{\Name{Author Name1} \Email{abc@sample.com}\and
%  \Name{Author Name2} \Email{xyz@sample.com}\\
%  \addr Address}

% Three or more authors with the same address:
% \coltauthor{\Name{Author Name1} \Email{an1@sample.com}\\
%  \Name{Author Name2} \Email{an2@sample.com}\\
%  \Name{Author Name3} \Email{an3@sample.com}\\
%  \addr Address}

% Authors with different addresses:
\altauthor{%
 \Name{Matthieu Jedor} \Email{matthieu.jedor@ens-paris-saclay.fr}\\
 \addr Centre Borelli, ENS Paris-Saclay \& Cdiscount
 \AND
 \Name{Jonathan Lou\"{e}dec} \Email{jonathan.louedec@cdiscount.com}\\
 \addr Cdiscount
 \AND
 \Name{Vianney Perchet} \Email{vianney.perchet@normalesup.org}\\
 \addr CREST, ENSAE Paris \& Criteo AI Lab%
}

\begin{document}

\maketitle

\begin{abstract}%
  The Greedy algorithm is the simplest heuristic in sequential decision problem that carelessly takes the locally optimal choice at each round, disregarding any advantages of exploring and/or information gathering. Theoretically, it is known to sometimes have poor performances, for instance even a linear regret (with respect to the time horizon) in the standard multi-armed bandit problem. On the other hand, this heuristic performs reasonably well in practice and it even has sublinear, and even near-optimal, regret bounds in some very specific  linear contextual and Bayesian bandit models.
  
  We build on a recent line of work and investigate bandit settings where the number of arms is relatively large and where simple greedy algorithms enjoy highly competitive performance, both in theory and in practice. We first provide a generic worst-case bound on the regret of the Greedy algorithm. When combined with some arms subsampling, we prove that it verifies near-optimal worst-case regret bounds in continuous, infinite and many-armed bandit problems. Moreover, for shorter time spans, the theoretical relative suboptimality of Greedy is even reduced.
  
  As a consequence, we subversively claim that for many interesting problems and associated horizons, the best compromise between theoretical guarantees, practical performances and computational burden is definitely to follow the greedy heuristic. We support our claim by many numerical experiments that show significant improvements compared to the state-of-the-art, even for moderately long time horizon. 
\end{abstract}

\begin{keywords}%
  Multi-armed bandits, greedy algorithm, continuous-armed bandits, infinite-armed bandits, many-armed bandits
\end{keywords}

\section{Introduction}

% MAB
Multi-armed bandits are basic instances of online learning problems with partial feedback \citep{bubeck2012regret, lattimore2018bandit, slivkins2019introduction}.
In the standard stochastic bandit problem, a learning agent sequentially pulls among a finite set of actions, or ``arms'', and observes a stochastic reward accordingly. 
The goal of the agent is then to maximize its cumulative reward, or equivalently, to minimize its regret, defined as the difference between the cumulative reward of an oracle (that knows the mean rewards of arms) and the one of the agent.
This problem requires to trade-off between exploitation (leveraging the information obtained so far) and exploration (gathering information on uncertain arms). 

The exploration, although detrimental in the short term, is usually needed in the worst-case as it ensures that the learning algorithm ``converges'' to the optimal arm in the long run. On the other hand, the \Greedy{} algorithm, an exploration-free strategy, focuses on pure exploitation and pulls the apparently best arm according to the information gathered thus far, at the risk of only sampling once the true optimal arm. This typically happens with Bernoulli rewards where only arms whose first reward is a 1 will be pulled again (and the others discarded forever). 
As a consequence, with some non-zero probability, the regret grows linearly with time as illustrated in the following example.

\begin{example}
\label{ex:failure}
Consider a relatively simple Bernoulli bandit problem consisting of $K=2$ arms with expected rewards $0.9$ and $0.1$ respectively. 
With probability at least 0.01, \Greedy{} fails to find the optimal arm. On the other hand, with probability $0.9^2$ it suffers no regret after the initial pulls. This results in a linear regret with a large variance.
This typical behavior is illustrated in Appendix \ref{sub:failure}.
\end{example}

Two solutions have been proposed to overcome this issue.
The first one is to force the exploration; for example with an initial round-robin exploration phase \citep{even2002pac}, or by spreading the exploration  uniformly over time à la \textsc{Epsilon-Greedy} \citep{auer2002finite}. However, both these algorithms need to know the different parameters of the problem to perform optimally (either to set the length of the round-robin phase or the value of $\varepsilon$), which represents a barrier to their use in practice.
The second solution is to have a data-driven and adaptive exploration;
for example, by adding an exploration term à la \textsc{UCB} \citep{auer2002finite}, by using a Bayesian update à la \textsc{Thompson Sampling} \citep{thompson1933likelihood,Perrault3}, by using data- and arm-dependent stopping times for exploring \`a la \textsc{Explore-Then-Commit} \citep{perchet2013, perchet2016}  or by tracking the number of pulls of suboptimal arms \citep{baransi2014sub, honda2010asymptotically, honda2015non}. With careful tuning, these algorithms are asymptotically optimal for specific reward distributions. Yet this asymptotic regime can occur after a long period of time \citep{garivier2019explore} and thus simpler heuristics might be preferable for relatively short time horizon \citep{vermorel2005multi, kuleshov2014algorithms}. 

Conversely, the simple \Greedy{} algorithm has recently been proved to satisfy near-optimal regret bounds in some linear contextual model \citep{bastani2017mostly, kannan2018smoothed, raghavan2020greedy} and a sublinear regret bound in some Bayesian many-armed setting \citep{bayati2020optimal}. In particular, this was possible because the \Greedy{} algorithm %(and actually, any algorithm) 
benefits from ``free'' exploration when the number of arms is large enough. 
We illustrate this behavior in the following example.
\begin{example}
Consider bandit problems where rewards are Gaussian distributions with unit variance and mean rewards are drawn i.i.d.\ from a uniform distribution over $[0, 1]$. In Figure \ref{fig:heatmap}, we compare the regret of \Greedy{} with the \textsc{UCB} algorithm for different number of arms and time horizon. 
For both algorithms, we observe a clear transition phase between problems with higher average regret (with darker colors) and problems with lower regret (with lighter colors). In this example, this transition takes the form of a diagonal.

This diagonal is much lower for \Greedy{} compared to \textsc{UCB}, meaning that \Greedy{} performs better in the problems in-between, and this in spite of \textsc{UCB} being optimal in the problem-dependent sense (on the other hand, that is when the horizon is large, \textsc{UCB} outperforms \Greedy{}).
The intuition is that, when the number of near-optimal arms is large enough, \Greedy{} rapidly converges to one of them while \textsc{UCB} is still in its initial exploration phase. The key argument here is the short time horizon relatively to the difficulty of the problem; we emphasis on the ``relatively'' as in practice the ``turning point'', that is the time horizon for which \textsc{UCB} performs better, can be extremely large.

\begin{figure}
    \includegraphics[width=0.99\linewidth]{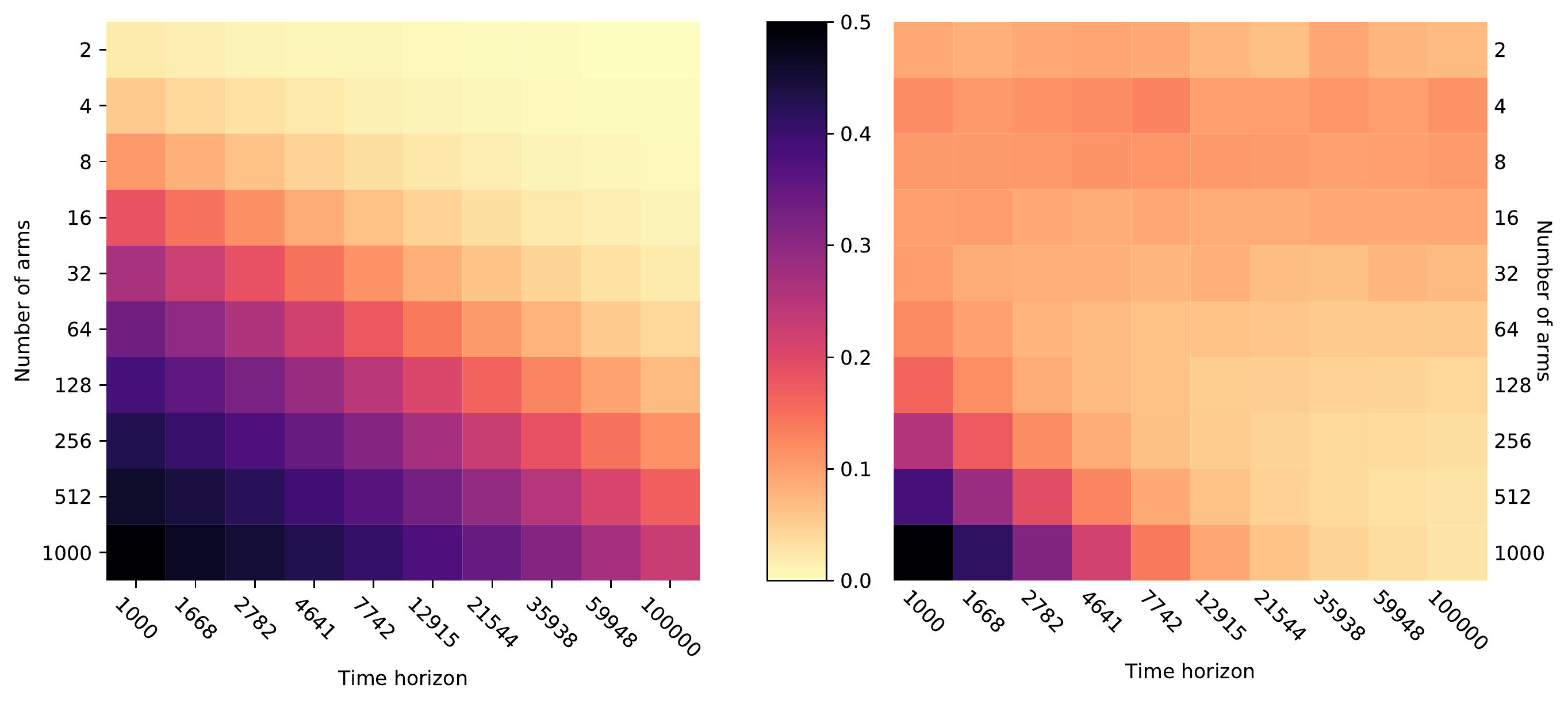}
    \caption{Bayesian regret divided by the horizon for \textsc{UCB} (left) and \Greedy{} (right) as a function of the number of arms and the horizon in Gaussian bandit problems. %Results are averaged over $500$ runs.
    }
    \label{fig:heatmap}
\end{figure}
\end{example}

Numerous interesting problems actually lie in the bottom left corner of Figure \ref{fig:heatmap}, i.e., bandit problems with a large number of arms and a relatively short time horizon and, as a consequence, the \Greedy{} algorithm should be considered as a valid baseline.

\paragraph{Our results}
We first provide a generic regret bound on \Greedy{}, and we illustrate how to derive worst-case regret bounds. We will then instantiate this regret bound to a uniformly sampled subset of arms and prove this satisfies near-optimal worst-case regret bounds in the continuous-armed, infinite-armed and many-armed bandit models. As a byproduct of our analysis, we get that the problem of unknown smoothness parameters can be overcome by a simple discretization depending only on the time horizon in the first of these models.
In all these settings, we repeat the experiments of previous papers and show that the \Greedy{} algorithm outmatches the state-of-the-art.
%We also present empirical results of the Greedy algorithm in the linear, cascading, mortal and budgeted bandit models that further show its competitive performance against existing algorithms.

\paragraph{Detailed comparison with prior work on  \Greedy{}}
% Many-armed
\Greedy{} recently regained some attention in Bayesian bandit problems with a large but finite number of arms \citep{bayati2020optimal}. It performs extremely well empirically when the number of arms is large, sometimes better than ``optimal'' algorithms; in that case, the regret of \Greedy{} is sublinear, though not optimal. In the following, we get rid of the strong Bayesian assumptions and we consider many different bandit models, where a subsampling technique is required and considered in the following.

% Linear contextual
Another recent success of \Greedy{} is in linear contextual bandit problems, as it is asymptotically optimal for a two-armed contextual bandit with linear rewards when a covariate diversity condition holds \citep{bastani2017mostly}. This idea can be extended to rewards given by generalized linear models.  
If observed contexts are selected by an adversary, but perturbed by white noise, then \Greedy{} can again have optimal regret guarantees \citep{kannan2018smoothed}. Additional assumptions can even improved those results
\citep{raghavan2018externalities, raghavan2020greedy}.
Those results hold because exploration is not needed thanks to the diversity in the contexts. We do not believe this assumption is satisfied in many practical scenarios and we are therefore rather interested in the implicit exploration of \Greedy{}. As a consequence, we shall no further consider the contextual framework (even if admittedly, our results could be generated via careful binning \citep{perchet2013}).
Interestingly, an extensive empirical study of contextual bandit algorithms found that \Greedy{} is actually the second most efficient algorithm and is extremely close to the first one \citep{bietti2018contextual}.

% Continuous and linear
The \Greedy{} algorithm has already been shown to enjoy great empirical performance in the continuous-armed bandit model \citep{jedor2020lifelong}. In this paper, we make formal this insight.
Finally, we mention that in the one-dimensional linear bandit problem with a known prior distribution, the cumulative regret of a greedy algorithm (under additional structural assumptions) admits an $\mathcal{O}(\sqrt{T})$ upper bound and its Bayes risk admits an $\mathcal{O}(\log T)$ upper bound \citep{mersereau2009structured}. Linear bandits are only considered empirically in this paper (see Appendix \ref{app:exp_linear}).

\paragraph{Related work on bandit models}
We also provide a short literature review on the different bandit settings studied in this paper.
% Continuous-armed bandits
\subparagraph{Continuous-armed bandits}

In the continuous-armed bandit problem with nonparametric regularity assumptions \citep{agrawal1995continuum}, lower and upper bounds are matching up to sub-logarithmic factors \citep{kleinberg2005nearly}. Additional structural assumptions can be considered to lower regret, such as margin condition \citep{auer2007improved}, Lipschitz (w.r.t.\ some fixed metric) mean-payoff function \citep{kleinberg2008multi}, local Lipschitzness (w.r.t.\ some dissimilarity function) \citep{bubeck2010x}. Adaptivity to smoothness parameters is also a crucial task \citep{bubeck2011lipschitz, locatelli2018adaptivity, hadiji2019polynomial}.

% Infinite-armed bandits
\subparagraph{Infinite-armed bandits}

The original infinite-armed bandit problem \citep{berry1997bandit} consists in a sequence of $n$ choices from an infinite number of Bernoulli arms, with $n \rightarrow \infty$. The objective was to minimize the long-run failure rate. The Bernoulli parameters are independent observations from a known distribution. 
With a uniform prior distribution, it is possible to control the cumulative regret \citep{bonald2013two}. A more general model has been considered
\citep{wang2009algorithms}. 
In particular, rewards are usually assumed to be uniformly bounded in $[0, 1]$ and the mean reward of a randomly drawn arm is $\varepsilon$-optimal with probability $\mathcal{O}\left( \varepsilon^\beta \right)$ for some $\beta > 0$.

% Many-armed Bandits
\subparagraph{Many-armed bandits}
Models in many-armed bandit problems are more varied, but the main idea is that the number of arms is large comparatively to the number of rounds \citep{teytaud2007anytime}. The exploration can be enhanced with a focus on a small subset of arms (using a cross-entropy based algorithm without theoretical guarantees thought) \citep{wang2017cemab}. 
The definition of regret can also be altered; by considering a given quantile fraction of the probability distribution over the mean rewards of arms \citep{chaudhuri2018quantile} or with respect to a ``satisfing'' action (the definition of a satisficing action is set by the learner) \citep{russo2018satisficing}.
Mean rewards can also be formulate with a semi-parametric model \citep{ou2019semi}.
A setting with multiple best/near-optimal arms without  any assumptions about the structure of the bandit instance has also been considered \citep{zhu2020multiple}. The objective there is to design algorithms that can automatically adapt to the unknown hardness of the problem.

\section{Preliminaries}

In the stochastic multi-armed bandit model, a learning agent interacts sequentially with a finite set of $K$ distributions $\mathcal{V}_1, \dots, \mathcal{V}_K$, called arms. At round $t \in \mathds{N}$, the agent chooses an arm $A_t$, which yields a stochastic reward $X_t$ drawn from the associated probability distribution $\mathcal{V}_{A_t}$. 
The objective is to design a sequential strategy maximizing the expected cumulative reward up to some time horizon~$T$. Let $\mu_1, \dots, \mu_K$ denote the mean rewards of arms, and $\mu^\star \coloneqq \max_{k \in [K]} \mu_k$ be the best mean reward. The goal is equivalent to minimizing the regret, defined as the difference between the expected reward accumulated by the oracle strategy always playing the best arm at each round, and the one accumulated by the strategy of the agent,
\begin{equation*}
    R_T = \mathbb{E} \left[ \sum_{t=1}^T \left( \mu^\star - X_t \right) \right] = T\mu^\star - \mathbb{E} \left[ \sum_{t=1}^T \mu_{A_t} \right]
\end{equation*}
where the expectation is taken with respect to the randomness in the sequence of successive rewards from each arm and the possible randomization in the strategy of the agent. Let $N_k(T)$ be the number of pulls of arm $k$ at the end of round $T$ and define the suboptimality gap of an arm $k  \in [K] \coloneqq \{ 1, \ldots, K \}$ as $\Delta_k = \mu^\star - \mu_k$. The expected regret is equivalently written as
\begin{equation*}
   R_T = \sum_{k=1}^K \Delta_k \mathbb{E} \left[ N_k(T) \right] \,.
\end{equation*}

\begin{wrapfigure}{R}{0.41\textwidth}
  \centering
  \vspace{-0.5cm}
    \begin{algorithm2e}[H]
    \DontPrintSemicolon
    \KwIn{Set of $K$ arms}
    
    \For{$t\gets 1$ \KwTo $T$}{
    Pull arm $\displaystyle A_t \in \argmax_{k \in [K]} \widehat{\mu}_k(t-1) $
    %= \begin{cases} \frac{1}{N_k(t-1)} \sum_{s=1}^{t-1} X_s \mathbf{1}\left\{ A_s = k\right\} & \mbox{if } N_k(t-1) > 0 \\ \infty & \mbox{otherwise} \end{cases}$
    }
    
    \caption{\Greedy{}}
    \label{alg:greedy}
  \end{algorithm2e}
  \vspace{-0.6cm}
\end{wrapfigure}

\paragraph{The \Greedy{} algorithm} Summarized in Algorithm \ref{alg:greedy},  \Greedy{}  is probably the simplest and the most obvious algorithm. Given a set of $K$ arms, at each round $t$, it pulls the arm with the highest average reward $\displaystyle \widehat{\mu}_{k}(t-1) = \frac{1}{N_k(t-1)} \sum_{s=1}^{t-1} X_s \mathbf{1}\left\{ A_s = k\right\}$\footnote{With the convention that $0 / 0 = \infty$, so that the first $K$ pulls initialize each counter.}. Thus,  it constantly exploits the best empirical arm.

In the rest of the paper, 
%unless stated otherwise, 
we assume that the stochastic reward $X_t$ takes the form $X_t = \mu_{A_t} + \eta_t$ where $\{\eta_t\}_{t=1}^T$ are i.i.d.\ 1-subgaussian white noise and that $\mu_k$ are bounded for all $k \in [K]$, $\mu_k \in [0, 1]$ without loss of generality. We further assume the knowledge of the time horizon $T$, unknown time horizon can be handled as usual in bandit problems \citep{besson2018doubling}. Finally, we say that arm $k$ is $\varepsilon$-optimal for some $\varepsilon > 0$ if $\mu_k \geq \mu^\star - \varepsilon$.

\section{Generic bound on \Greedy{}}
\label{sec:generic}

We now present the generic worst-case regret bound on  \Greedy{}  that we will use to derive near-optimal bounds in several bandit models. The proof is provided in Appendix \ref{app:proof_generic_bound}.

\begin{theorem}
The regret of  \Greedy{}  verifies for all $\varepsilon > 0$
\begin{equation*}
R_T \leq T \exp\left( - N_\varepsilon \frac{\varepsilon^2}{2} \right) + 3 \varepsilon T + \frac{6 K}{\varepsilon} + \sum_{k=1}^K \Delta_k 
\end{equation*}
where $N_\varepsilon$ denotes the number of $\varepsilon$-optimal arms.
\label{thm:generic_bound_greedy}
\end{theorem}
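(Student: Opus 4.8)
The plan is to decompose the regret according to the quality of the arm pulled at each round, using the event that at least one $\varepsilon$-optimal arm has been identified as the empirically best arm. Let $\mathcal{E}$ denote the ``good'' event on which, at some point, the \Greedy{} algorithm latches onto an $\varepsilon$-optimal arm and never leaves it (or more precisely, spends almost all of its remaining time on $\varepsilon$-optimal arms). On $\mathcal{E}$, the per-round regret is at most roughly $\varepsilon$ (times $T$), plus a bounded correction term for the finitely many rounds spent pulling suboptimal arms before convergence; on the complement $\mathcal{E}^c$, we bound the regret trivially by $T$ times the probability of $\mathcal{E}^c$, since $\Delta_k \le 1$. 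The term $\sum_k \Delta_k$ accounts for the mandatory initialization (the first $K$ pulls, one per arm, from the $0/0 = \infty$ convention).

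The first key step is to control $\mathbb{P}(\mathcal{E}^c)$, i.e.\ the probability that \Greedy{} never commits to an $\varepsilon$-optimal arm. Here is where the ``free exploration'' from having $N_\varepsilon$ good arms enters: each of the $N_\varepsilon$ $\varepsilon$-optimal arms gets pulled at least once (initialization), and after its first pull its empirical mean is $\mu_k + \eta$ with $\eta$ a single $1$-subgaussian draw. Intuitively, \Greedy{} fails to keep exploiting $\varepsilon$-optimal arms only if each of them, early on, has an empirical mean that falls well below $\mu^\star - \varepsilon$ and thus gets abandoned. Since the $N_\varepsilon$ initial noise draws are independent, a union/product bound of subgaussian tails gives that the probability all $\varepsilon$-optimal arms are ``unlucky'' by a margin of order $\varepsilon$ is at most $\exp(-N_\varepsilon \varepsilon^2 / 2)$; multiplying by $T$ yields the first term. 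I expect this step — making precise exactly which event on the first pulls of the good arms guarantees eventual exploitation of one of them — to be the main obstacle, because \Greedy{}'s dynamics are path-dependent: an arm can be abandoned and its estimate frozen, so one must argue that at least one good arm stays competitive against whatever arm \Greedy{} currently favors, uniformly over the history.

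The second step is to bound, on $\mathcal{E}$, the number of pulls of arms that are not $\varepsilon$-optimal. A suboptimal arm $k$ (with $\Delta_k > \varepsilon$) can only be pulled at round $t$ if its empirical mean exceeds that of the currently-good arm, which requires a deviation of order at least $\Delta_k - \varepsilon \gtrsim \varepsilon$ in one of the two empirical means. Standard self-normalized / Hoeffding-type arguments bound $\mathbb{E}[N_k(T)]$ for such arms by something like $O(1/\varepsilon^2)$, and summing over the $K$ arms and weighting by $\Delta_k \le 1$ produces the $6K/\varepsilon$ term (the constants are slack and absorb the peeling over time). The residual pulls of $\varepsilon$-optimal-but-not-optimal arms each cost at most $\varepsilon$ per round, over at most $T$ rounds, giving a contribution at most $\varepsilon T$; combined with the other $\varepsilon$-costs incurred along the way (e.g.\ during the transient phase) this is collected into the $3\varepsilon T$ term. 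Assembling the three pieces — $T\,\mathbb{P}(\mathcal{E}^c) \le T\exp(-N_\varepsilon \varepsilon^2/2)$, the $O(\varepsilon T)$ exploitation cost, the $O(K/\varepsilon)$ cost of spurious suboptimal pulls, and the $\sum_k \Delta_k$ initialization cost — yields the stated bound, valid for every $\varepsilon > 0$ since $\varepsilon$ was arbitrary throughout.
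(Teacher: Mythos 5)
There is a genuine gap, and it is exactly the one you flag yourself as ``the main obstacle.'' Your good event $\mathcal{E}$ is defined through the algorithm's trajectory (``Greedy latches onto an $\varepsilon$-optimal arm and never leaves it''), and your bound $\mathbb{P}(\mathcal{E}^c)\le \exp(-N_\varepsilon\varepsilon^2/2)$ is argued from the \emph{first} noise draws of the good arms only. That is not enough: an $\varepsilon$-optimal arm whose first sample is favorable can be pulled several more times, have its empirical mean drift below $\mu_k-\varepsilon$ at some later sample count, and then be abandoned, after which Greedy only needs to beat that now-deflated estimate. A single-sample subgaussian tail per good arm therefore does not control the event you need, and the path-dependence you worry about is not resolved by your sketch.

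The paper's proof removes the path-dependence by defining the good event purely in terms of the reward sequences: $\mathfrak{E}$ is the event that at least one $\varepsilon$-optimal arm $k$ satisfies $\widehat{\mu}_k^t \ge \mu_k - \varepsilon$ \emph{simultaneously for all} sample counts $t$. The probability that a single arm violates this is controlled by an anytime (maximal) concentration inequality, Lemma 2 of \citet{bayati2020optimal}, which gives $\mathbb{P}(\exists t:\ \widehat{\mu}_k^t < \mu_k - \varepsilon)\le \exp(-\varepsilon^2/2)$ uniformly over $t$ (not just for the first pull); independence of the reward streams across the $N_\varepsilon$ good arms then yields $\mathbb{P}(\mathfrak{E}^c)\le \exp(-N_\varepsilon\varepsilon^2/2)$. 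On $\mathfrak{E}$, since that arm's estimate never drops below $\mu^\star-2\varepsilon$ and is always available to Greedy, any arm pulled after initialization must have current empirical mean at least $\mu^\star-2\varepsilon$, whatever the history; for arms with $\Delta_k>3\varepsilon$ this forces a deviation of at least $\Delta_k-2\varepsilon\ge\varepsilon$, giving $\mathbb{E}[N_k(T)\,|\,\mathfrak{E}]\le 1+2/(\Delta_k-2\varepsilon)^2$ and, via $\Delta_k\le 3(\Delta_k-2\varepsilon)$, the $6K/\varepsilon$ term; arms with $\Delta_k\le 3\varepsilon$ give the $3\varepsilon T$ term and the initial pulls give $\sum_k\Delta_k$. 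Your assembly of the final bound matches this structure, but without the anytime lemma and the reward-sequence (algorithm-independent) definition of the good event, the first term of the bound is not established, so the proposal as written does not constitute a proof.
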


\begin{remark}
This bound generalizes  a Bayesian analysis \citep{bayati2020optimal}. It is slightly looser; indeed the Bayesian assumption can be used to bound  $N_\varepsilon$ and further improve the third term by bounding the number of suboptimal arms. Those techniques  usually do not work in the stochastic setting.
\end{remark}

It is easy to see that this bound is meaningless when $N_\varepsilon$ is independent of $T$ as one of the first two terms will, at least, be linear with respect to $T$. 
On the other hand, $N_\varepsilon$ has no reason to depend on the time horizon.
The trick to obtain sublinear regret will be to lower bound $N_\varepsilon$ by a function of the number of arms $K$, then to optimize $K$ with respect to the time horizon $T$. To motivate this, consider the following example.

\begin{example}
Consider a problem with a huge number of arms $n$ with mean rewards drawn i.i.d.\ from a uniform distribution over $[0,1]$. In that specific case, we roughly have $N_\varepsilon \approx \varepsilon K$ for some subset of arms, chosen uniformly at random, with cardinality $K$. Taking $\varepsilon = \left( \frac{\log T}{K} \right)^{1/3}$, so that the first term in the generic bound is sublinear, yields a $\mathcal{O}\left( \max\left\{ T\left(\frac{\log T}{K}\right)^{1/3}, 
K \left(\frac{K}{\log T}\right)^{1/3} \right\} \right)$ regret bound, which comes from the second and third terms respectively. If we sub-sampled $K = T^{3/5} \left(\log T\right)^{2/5}$ arms, so that the maximum is minimized, the regret bound becomes $\mathcal{O} \left( T^{4/5} \left( \log T \right)^{1/5} \right)$; in particular it is sublinear. 
\end{example}

This argument motivates this paper and will be made formal in subsequent sections. Though this does not lead to optimal bounds -- as expected by the essence of the greedy heuristic in the multi-armed bandit model --, it will nonetheless be highly competitive for short time span in many practical bandit problems.

It is possible to theoretically improve the previous result by using a chaining/peeling type of argument. Unfortunately, it is not practical to derive better explicit guarantees as it involves an integral without close form expressions; its proof is postponed to Appendix \ref{CR:proof_chaining}.
\begin{corollary}\label{CR:chaining}
The regret of  \Greedy{}  verifies 
\begin{equation*}
R_T \leq \min_\varepsilon\Big\{ 3 \varepsilon T + \frac{6 K}{\varepsilon} + \int_{\varepsilon}^1 \left(3    T + \frac{6 K}{x^2}\right) \exp\left( - N_x \frac{x^2}{2} \right) dx\Big\} + T \exp\left( - \frac{K}{2} \right) + \sum_{k=1}^K \Delta_k \,.
\end{equation*}
\end{corollary}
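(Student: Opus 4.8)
\noindent The plan is to refine the final step in the proof of Theorem~\ref{thm:generic_bound_greedy}. That argument produces, for every scale $x>0$, a ``failure'' event $\mathcal{B}_x$ — morally, the event that \Greedy{} never ends up committed to the set of $x$-optimal arms — with (i) $\mathbb{P}(\mathcal{B}_x)\le\exp(-N_x x^2/2)$ and (ii) on $\mathcal{B}_x^c$ the regret is at most $g(x)+\sum_k\Delta_k$ where $g(x):=3xT+\tfrac{6K}{x}$; bounding the regret on $\mathcal{B}_\varepsilon$ by the trivial $T\,\mathbb{P}(\mathcal{B}_\varepsilon)$ then gives Theorem~\ref{thm:generic_bound_greedy}. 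The additional structure I would exploit is that these events are nested, $\mathcal{B}_{x'}\subseteq\mathcal{B}_x$ whenever $x'\ge x$ (being committed to the $x$-optimal arms forces being committed to the larger set of $x'$-optimal arms), and that $\mathcal{B}_1$ is exactly the event that \Greedy{} commits to no arm at all, so that $N_1=K$ and $\mathbb{P}(\mathcal{B}_1)\le\exp(-K/2)$.

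First I would introduce the \emph{commitment scale} $L:=\inf\{x>0:\ \mathcal{B}_x^c\text{ holds}\}$, so that nestedness yields $\{L>x\}=\mathcal{B}_x$ and hence $\mathbb{P}(L>x)\le\exp(-N_x x^2/2)$. By (ii), on $\{L\le x\}$ the regret is at most $g(x)+\sum_k\Delta_k$, and it is always trivially at most $T$. Fixing any $\varepsilon\in(0,1]$, splitting the probability space into $\{L\le\varepsilon\}$, $\{\varepsilon<L\le1\}$ and $\{L>1\}=\mathcal{B}_1$, and noting that the term $\sum_k\Delta_k$ then appears with total probability weight at most $1$, one obtains
\begin{equation*}
R_T\ \le\ \sum_{k=1}^K\Delta_k\;+\;g(\varepsilon)\,\mathbb{P}(L\le\varepsilon)\;+\;\mathbb{E}\!\left[g(L)\,\mathbf{1}\{\varepsilon<L\le1\}\right]\;+\;T\,\mathbb{P}(L>1)\,.
\end{equation*}

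For the remaining expectation I would use that $g$ is convex on $(0,\infty)$ (indeed $g''(x)=12K/x^{3}>0$), with $g'(x)=3T-\tfrac{6K}{x^{2}}$. Writing $g(L)=g(\varepsilon)+\int_\varepsilon^{L}g'(x)\,dx$ and discarding the nonpositive part of this integral over the region $\{g'<0\}$ gives, on $\{\varepsilon<L\le1\}$, that $g(L)\le g(\varepsilon)+\int_\varepsilon^{1}|g'(x)|\,\mathbf{1}\{x<L\}\,dx$; taking expectations, swapping $\mathbb{E}$ and $\int$ by Tonelli, and using $\mathbb{P}(L>x)\le\exp(-N_x x^{2}/2)$ together with $|g'(x)|\le 3T+\tfrac{6K}{x^{2}}$ yields
\begin{equation*}
\mathbb{E}\!\left[g(L)\,\mathbf{1}\{\varepsilon<L\le1\}\right]\ \le\ g(\varepsilon)\,\mathbb{P}(\varepsilon<L\le1)\;+\;\int_\varepsilon^{1}\Big(3T+\frac{6K}{x^{2}}\Big)\exp\!\Big(-N_x\frac{x^{2}}{2}\Big)\,dx\,.
\end{equation*}
Plugging this back, the two $g(\varepsilon)$ terms combine into $g(\varepsilon)\,\mathbb{P}(L\le1)\le g(\varepsilon)=3\varepsilon T+\tfrac{6K}{\varepsilon}$; bounding $\mathbb{P}(L>1)\le\exp(-K/2)$ and taking the minimum over $\varepsilon$ (which was free) gives exactly the claimed inequality.

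The chaining is elementary once (i), (ii) and the nestedness are available, so the real work lies inside the proof of Theorem~\ref{thm:generic_bound_greedy}: one has to check that the auxiliary events constructed there can be taken monotone in the scale $x$ — equivalently, that a commitment scale $L$ with $\{L>x\}=\mathcal{B}_x$ is well defined — while keeping (ii) valid for all $x$ simultaneously with the stated constants. If those events are not literally nested one replaces $\mathcal{B}_x$ by $\bigcup_{y\ge x}\mathcal{B}_y$ and verifies that the probability bound survives (the tail $y\mapsto\exp(-N_y y^{2}/2)$ decays fast enough). Everything else — convexity of $g$, the sign analysis of $g'$, the Tonelli exchange and the bound $|g'(x)|\le 3T+6K/x^{2}$ — is routine bookkeeping I would not dwell on.
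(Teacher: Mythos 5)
Your proposal is correct and takes essentially the same route as the paper: both peel over the nested good events $\mathfrak{E}_x$ across all scales, bound the regret on each slice by $3xT+6K/x+\sum_{k}\Delta_k$, weight by the tail bound $\exp\left(-N_x x^2/2\right)$ from Theorem \ref{thm:generic_bound_greedy}, and obtain the boundary term $T\exp(-K/2)$ from $N_1=K$. The only difference is cosmetic: the paper discretizes the scales, applies Abel summation and lets the mesh go to zero, whereas you run the same computation in continuous form via the commitment scale $L$ and a Tonelli exchange.
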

\section{Continuous-armed bandits}

% Setting
We first study \Greedy{} in the continuous-armed bandit problem. We recall that in this model, the number of actions is infinitely large. Formally, let $\mathcal{A}$ be an arbitrary set and $\mathcal{F}$ a set of functions from $\mathcal{A} \rightarrow \mathbb{R}$. The learner is given access to the action set $\mathcal{A}$ and function class $\mathcal{F}$. In each round $t$, the learner chooses an action $A_t \in \mathcal{A}$ and receives reward $X_t=f(A_t) + \eta_t$, where $\eta_t$ is some noise and $f \in \mathcal{F}$ is fixed, but unknown.
As usual in the literature \citep{kleinberg2005nearly, auer2007improved, hadiji2019polynomial}, we restrict ourselves to the case $\mathcal{A} = [0, 1]$, $\eta_t$ is 1-subgaussian, $f$ takes values in $[0, 1]$ and $\mathcal{F}$ is the set of all functions that satisfy an H\"older condition around the maxima. Formally,

\begin{assumption} \label{hyp:continuous}
There exist constants $L\geq0$ and $\alpha > 0$ such that for all $x\in[0,1]$,
\begin{equation*}
   f(x^\star) - f(x) \leq L \cdot |x^\star - x|^\alpha 
\end{equation*}
where $x^\star$ denotes the optimal arm.
\end{assumption}

This assumption captures the degree of continuity at the maxima and it is needed to ensure that this maxima is not reached at a sharp peak.

% Greedy algorithm
Similarly to CAB1 \citep{kleinberg2005nearly}, the \Greedy{} algorithm will work on a discretization of the action set into a finite set of $K$ equally spaced points $\{1/K, 2/K, \dots, 1\}$. Each point is then considered as an arm and we can apply the standard version of \Greedy{} on them. 

\begin{remark}
The same analysis holds if it chooses a point uniformly at random from the chosen interval $\left[\frac{k-1}{K}, \frac{k}{K} \right]$ for $1 \leq k \leq K$, see also \citet{auer2007improved}.
%(Worse performance empirically...)
\end{remark}

The problem is thus to set the number of points $K$.
The first regret bound on the \Greedy{} algorithm assumes that the smoothness parameters are known. The proof is provided in Appendix \ref{app:proof_continuous}.

\begin{theorem}
\label{thm:continuous}
If $f:[0, 1] \rightarrow [0, 1]$ satisfies Assumption \ref{hyp:continuous}, then for all $\varepsilon > 0$ and a discretization of $K \geq \left( \frac{L}{\varepsilon}\right)^{1/\alpha}$ arms, the regret of \Greedy{} verifies 

\begin{equation*}
R_T \leq T \exp\left( - \frac{K}{2 L^{1/\alpha}} \varepsilon^{2 + 1/\alpha} \right) + 4 \varepsilon T + \frac{6 K}{\varepsilon} + K \,.
\end{equation*}

In particular, the choice

\begin{equation*}
    K = \left(32/27 \right)^{\alpha / (4\alpha+1)} L^{2/(4\alpha+1)} T^{(2\alpha+1)/(4\alpha+1)} \left( \log T \right)^{2\alpha/(4\alpha+1)}
\end{equation*}

yields for $L \leq \sqrt{\frac{3}{2T}} K^{\alpha + 1/2}$,

\begin{equation*}
R_T \leq 13 L^{2/(4\alpha+1)} T^{(3\alpha+1)/(4\alpha+1)} \left( \log T \right)^{2\alpha/(4\alpha+1)} + 1 \,.
\end{equation*}
\label{thm:bound_continuous}
\end{theorem}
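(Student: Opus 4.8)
The plan is to reduce the statement to the generic bound of Theorem~\ref{thm:generic_bound_greedy} applied to the $K$-point discretized instance, and then to optimize over the two free parameters $\varepsilon$ and $K$.

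\emph{From the generic bound to the first inequality.} Running \Greedy{} on the grid $\{1/K,\dots,1\}$, I would split the regret into (i) the regret of \Greedy{} on the finite $K$-armed instance whose means are $f(k/K)$, and (ii) the discretization bias $T\big(f(x^\star)-\max_{k\in[K]}f(k/K)\big)$. The grid point closest to $x^\star$ is at distance at most $1/K$, so Assumption~\ref{hyp:continuous} bounds (ii) by $T L (1/K)^\alpha$, which is at most $\varepsilon T$ precisely because $K\ge(L/\varepsilon)^{1/\alpha}$; this extra $\varepsilon T$ is what turns the $3\varepsilon T$ of the generic bound into $4\varepsilon T$. Theorem~\ref{thm:generic_bound_greedy} controls (i) by $T\exp(-N_\varepsilon\varepsilon^2/2)+3\varepsilon T+6K/\varepsilon+\sum_k\Delta_k$, and $\sum_k\Delta_k\le K$ since every gap lies in $[0,1]$, giving the last term. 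It then remains to lower bound $N_\varepsilon$: by Assumption~\ref{hyp:continuous}, every grid point within distance $(\varepsilon/L)^{1/\alpha}$ of $x^\star$ is $\varepsilon$-optimal, and a counting argument gives at least $K(\varepsilon/L)^{1/\alpha}$ such points among the $K$ grid points (the hypothesis $K\ge(L/\varepsilon)^{1/\alpha}$ ensures this number is at least one). Plugging in $N_\varepsilon\ge K(\varepsilon/L)^{1/\alpha}$, so that $N_\varepsilon\varepsilon^2/2\ge \tfrac{K}{2L^{1/\alpha}}\varepsilon^{2+1/\alpha}$, yields the first displayed inequality.

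\emph{Tuning the parameters.} For the explicit bound I would take $\varepsilon=\sqrt{3K/(2T)}$, the minimizer over $\varepsilon>0$ of the two middle terms $4\varepsilon T+6K/\varepsilon$, whose value there is $4\sqrt{6KT}$. With this choice the admissibility constraint $K\ge(L/\varepsilon)^{1/\alpha}$ rewrites exactly as $L\le\sqrt{3/(2T)}\,K^{\alpha+1/2}$, i.e.\ the hypothesis of the second part. It then suffices to pick $K$ just large enough that the exponential term is negligible, i.e.\ $\tfrac{K}{2L^{1/\alpha}}\varepsilon^{2+1/\alpha}\ge\log T$; substituting $\varepsilon=\sqrt{3K/(2T)}$ and solving this for $K$ produces exactly the stated order $L^{2/(4\alpha+1)}T^{(2\alpha+1)/(4\alpha+1)}(\log T)^{2\alpha/(4\alpha+1)}$, the displayed numerical constant coming from that computation, after which $T\exp(-N_\varepsilon\varepsilon^2/2)\le T e^{-\log T}=1$. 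Collecting, $R_T\le 1+4\sqrt{6KT}+K$, and since $K$ carries a smaller exponent on $T$ than $\sqrt{KT}$ does, while $\sqrt{KT}$ carries the exponent $(3\alpha+1)/(4\alpha+1)$, bounding the numerical constants gives the announced inequality.

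\emph{Where the work is.} Theorem~\ref{thm:generic_bound_greedy} already does the probabilistic heavy lifting, so the only genuine subtlety is the discretization step: one must argue carefully that the bias is at most $L(1/K)^\alpha$ and that a neighborhood of $x^\star$ still contains on the order of $K(\varepsilon/L)^{1/\alpha}$ grid points even when $x^\star$ lies close to an endpoint of $[0,1]$. The rest is a routine but slightly tedious optimization over $\varepsilon$ and $K$ to extract the explicit constants.
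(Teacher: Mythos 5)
Your proposal follows essentially the same route as the paper's proof: decompose the regret into the discretization bias (at most $\varepsilon T$ once $K \geq (L/\varepsilon)^{1/\alpha}$) plus the finite-armed regret from Theorem \ref{thm:generic_bound_greedy}, lower bound $N_\varepsilon$ by counting the grid points within distance $(\varepsilon/L)^{1/\alpha}$ of $x^\star$, bound $\sum_k \Delta_k \leq K$, then tune $\varepsilon = \sqrt{3K/(2T)}$ and pick $K$ so the exponential term is $O(1)$. The only piece you leave implicit — that the neighborhood of $x^\star$ indeed contains at least $K(\varepsilon/L)^{1/\alpha}$ grid points — is exactly what the paper handles with its small floor/ceiling lemma, so the argument is correct and matches the paper's.
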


This bound is sublinear with respect to the time horizon $T$, yet suboptimal. Indeed, the lower bound in this setting is $\Omega\left( T^{(\alpha+1)/(2\alpha+1)} \right)$ and the \textsc{MOSS} algorithm run on a optimal discretization attains it since its regret scales, up to constant factor, as $\mathcal{O}\left( L^{1/(2\alpha+1)} T^{(\alpha+1)/(2\alpha +1)} \right)$ \citep{hadiji2019polynomial}. 
Yet, as mentioned previously, \Greedy{} is theoretically competitive for short time horizon due to small constant factors. In Figure \ref{fig:bounds_known}, we displayed regret upper bounds of \textsc{MOSS} and \Greedy{} as a function of time for functions that satisfy Assumption \ref{hyp:continuous} with smoothness parameters $L=1$ and $\alpha=1$. We see that the bound on \Greedy{} is stronger up until a moderate time horizon $T \approx 12000$. 

Of course, assuming that the learner knows smoothness parameters $\alpha$ and $L$ is often unrealistic. If we want to ensure a low regret on very regular functions, by taking $\alpha \rightarrow \infty$, we have the following corollary.
\begin{corollary}
If $f:[0, 1] \rightarrow [0, 1]$ satisfies Assumption \ref{hyp:continuous}, then for a discretization of $K = \sqrt{\frac{4}{3} T \log T}$ arms, the regret of the \Greedy{} algorithm verifies for $L \leq 3^{1/4} \left( 4/3 \right)^{(2\alpha+1)/4} T^{2\alpha} \left( \log T \right)^{(\alpha+1)/2}$,

\begin{equation*}
R_T \leq 15 \max\{ L^{1/(2\alpha+1)}, L^{-1/(2\alpha+1)} \} T^{(3\alpha+2)/(4\alpha+2)} \sqrt{\log T} + 1 \,.
\end{equation*}
\label{cor:greedy_emp}
\end{corollary}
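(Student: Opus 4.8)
The plan is to deduce the corollary from the first (four-term) inequality of Theorem~\ref{thm:continuous}, instantiated at a discretization level that depends only on the horizon, namely $K=\sqrt{\frac{4}{3}T\log T}$, and then to verify that this single $\alpha$- and $L$-agnostic choice still produces a controlled bound. The key point is that although the algorithm must fix $K$ without knowing $\alpha$ or $L$, the auxiliary parameter $\varepsilon$ of Theorem~\ref{thm:continuous} appears only in the analysis, so it may be chosen as a function of $\alpha,L,T$; morally, $K=\sqrt{\frac{4}{3}T\log T}$ is the parameter-free surrogate for the discretization rule of Theorem~\ref{thm:continuous} in the regime of very smooth payoffs ($\alpha\to\infty$), where the $L$-dependence of $K$ disappears.

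First I would choose $\varepsilon$ so as to annihilate the exponential term: taking $\varepsilon=\bigl(2L^{1/\alpha}\log T/K\bigr)^{\alpha/(2\alpha+1)}$ makes $\varepsilon^{2+1/\alpha}=2L^{1/\alpha}\log T/K$, hence $\frac{K}{2L^{1/\alpha}}\varepsilon^{2+1/\alpha}=\log T$, so the first term equals $Te^{-\log T}=1$. Substituting $K=\sqrt{\frac{4}{3}T\log T}$ one then checks that the second term $4\varepsilon T$ has order $L^{1/(2\alpha+1)}T^{(3\alpha+2)/(4\alpha+2)}(\log T)^{\alpha/(4\alpha+2)}$, the third term $6K/\varepsilon$ has order $L^{-1/(2\alpha+1)}T^{(3\alpha+1)/(4\alpha+2)}(\log T)^{(\alpha+1)/(4\alpha+2)}$, and the additive $+K$ has the strictly smaller order $\sqrt{T\log T}$. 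Since $T\ge 1$, $\log T\ge 1$, $\frac{3\alpha+1}{4\alpha+2}\le\frac{3\alpha+2}{4\alpha+2}$ and $\frac{\alpha}{4\alpha+2},\frac{\alpha+1}{4\alpha+2}\le\frac12$, each of these is at most a constant times $\max\{L^{1/(2\alpha+1)},L^{-1/(2\alpha+1)}\}T^{(3\alpha+2)/(4\alpha+2)}\sqrt{\log T}$ — and this is exactly where the factor $\max\{L^{1/(2\alpha+1)},L^{-1/(2\alpha+1)}\}$ comes from: it simply records whether the second term (for large $L$) or the third (for small $L$) is the larger of the two. Tallying the remaining absolute constants, which amount to a few small powers of $4/3$ and $3/2$, one gets a leading constant below $15$, the residual $+1$ being the exponential term.

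It remains to pin down the admissible range of $L$. The only constraint imposed by Theorem~\ref{thm:continuous} is that the grid be fine enough, $K\ge (L/\varepsilon)^{1/\alpha}$, i.e.\ $L\le K^{\alpha}\varepsilon$; inserting the chosen $\varepsilon$ and $K=\sqrt{\frac{4}{3}T\log T}$ and solving the resulting implicit inequality for $L$ gives $L\le 3^{1/4}K^{(2\alpha+1)/2}T^{-1/4}(\log T)^{1/4}$, which after expanding $K$ yields the displayed admissibility condition — in particular the constant equals $3^{1/4}(4/3)^{(2\alpha+1)/4}$ (using $3^{1/4}(4/3)^{1/4}=\sqrt2$), and the log-power $(\alpha+1)/2$ arises as $\alpha/2$ from $K^{\alpha}$ plus the extra $1/2$ carried by $\varepsilon$.

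The only real work is the bookkeeping of the second paragraph: tracking how the powers of $4/3$, $3/2$ and $2$ produced by the substitution recombine, checking that the second and third terms are genuinely dominated once their $T$- and $\log T$-exponents are relaxed to $\frac{3\alpha+2}{4\alpha+2}$ and $\frac12$, and confirming the constant stays below $15$ uniformly over $\alpha>0$. No probabilistic ingredient beyond Theorem~\ref{thm:continuous} is needed; the corollary is just that generic bound evaluated at an $\alpha$-free grid with the free parameter $\varepsilon$ tuned to kill the exponential term.
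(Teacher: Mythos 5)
Your proposal is correct and follows essentially the paper's own route: the paper's proof is precisely an application of the first bound of Theorem~\ref{thm:continuous} with $\varepsilon=\bigl(L^{1/\alpha}\sqrt{3\log T/T}\bigr)^{\alpha/(2\alpha+1)}$, which is exactly your exponential-killing choice $\bigl(2L^{1/\alpha}\log T/K\bigr)^{\alpha/(2\alpha+1)}$ once $K=\sqrt{\tfrac{4}{3}T\log T}$ is substituted, and your term-by-term bookkeeping (exponents relaxed to $\tfrac{3\alpha+2}{4\alpha+2}$ and $\tfrac12$, constant below $15$, the $+1$ from the exponential term) checks out. One small caveat: expanding your intermediate inequality $L\le 3^{1/4}K^{(2\alpha+1)/2}T^{-1/4}(\log T)^{1/4}$ actually gives $L\le 3^{1/4}(4/3)^{(2\alpha+1)/4}T^{\alpha/2}(\log T)^{(\alpha+1)/2}$, i.e.\ $T^{\alpha/2}$ rather than the displayed $T^{2\alpha}$ — the matching constant and log-power indicate a typo in the stated condition rather than an error on your side (and for $L$ beyond the $T^{\alpha/2}$ range the claimed bound exceeds $T$ and is vacuously true), so correctness is unaffected.
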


\begin{proof}
It is a direct consequence of Theorem \ref{thm:bound_continuous} with the choice of $\varepsilon = \left( L^{1/\alpha} \sqrt{\frac{3\log T}{T}} \right)^{\alpha/(2\alpha +1)}$.
\end{proof}

Once again, \Greedy{} attains a sublinear, yet suboptimal, regret bound.
In the case of unknown smoothness parameters, the regret lower bound is $\Omega\left( L^{1/(1+\alpha)} T^{(\alpha+2)/(2\alpha+2)} \right)$ \citep{locatelli2018adaptivity}, which is attained by
MeDZO with a $\mathcal{O}\left( L^{1/(\alpha+1)} T^{(\alpha+2)/(2\alpha+2)} \left( \log_2 T \right)^{3/2} \right)$ regret bound \citep{hadiji2019polynomial}. 
This time, \Greedy{} also has a lower polynomial dependency which makes it even more competitive theoretically. In Figure \ref{fig:bounds_unknown}, we displayed regret upper bounds of \textsc{MeDZO} and \Greedy{} (with unknown smoothness parameters) as a function of time for functions that satisfy Assumption \ref{hyp:continuous} with smoothness parameters $L=1$ and $\alpha=1$. Here we cannot see the turning point since \Greedy{} is stronger up until an extremely large time horizon $T \approx 1,9 \cdot 10^{46}$. 
Our numerical simulations will further support this theoretical advantage.

\begin{figure}
  \floatconts
  {fig:bounds_continuous}
  {\caption{Regret upper bound of various algorithms as a function of time in the continuous-armed bandit model with smothness parameters $L=1$ and $\alpha=1$.}}
  {%
    \subfigure[Known smoothness]{\label{fig:bounds_known}%
      \includegraphics[width=0.49\linewidth]{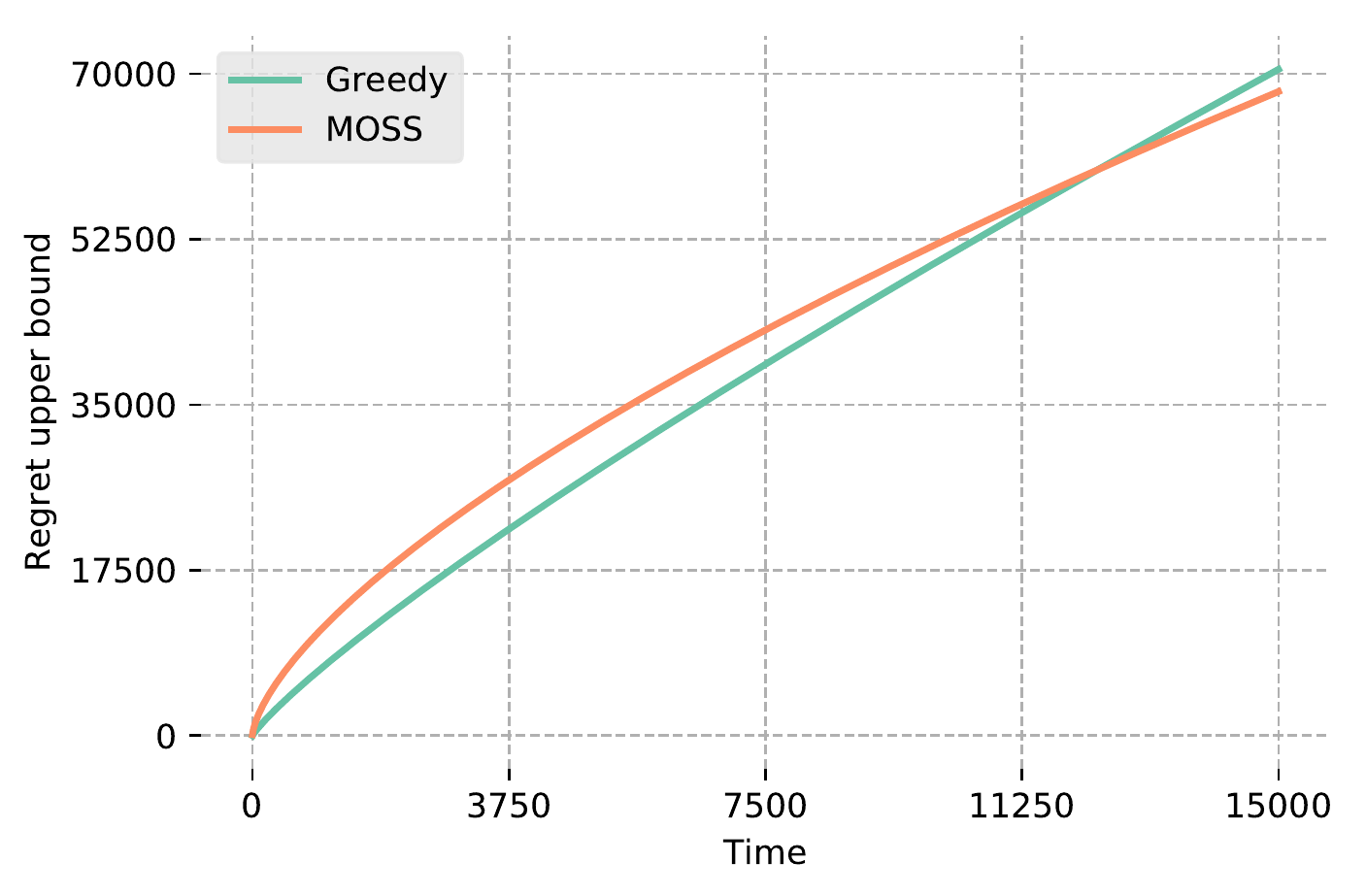}}%
    %\qquad
    \subfigure[Unknown smoothness]{\label{fig:bounds_unknown}%
      \includegraphics[width=0.49\linewidth]{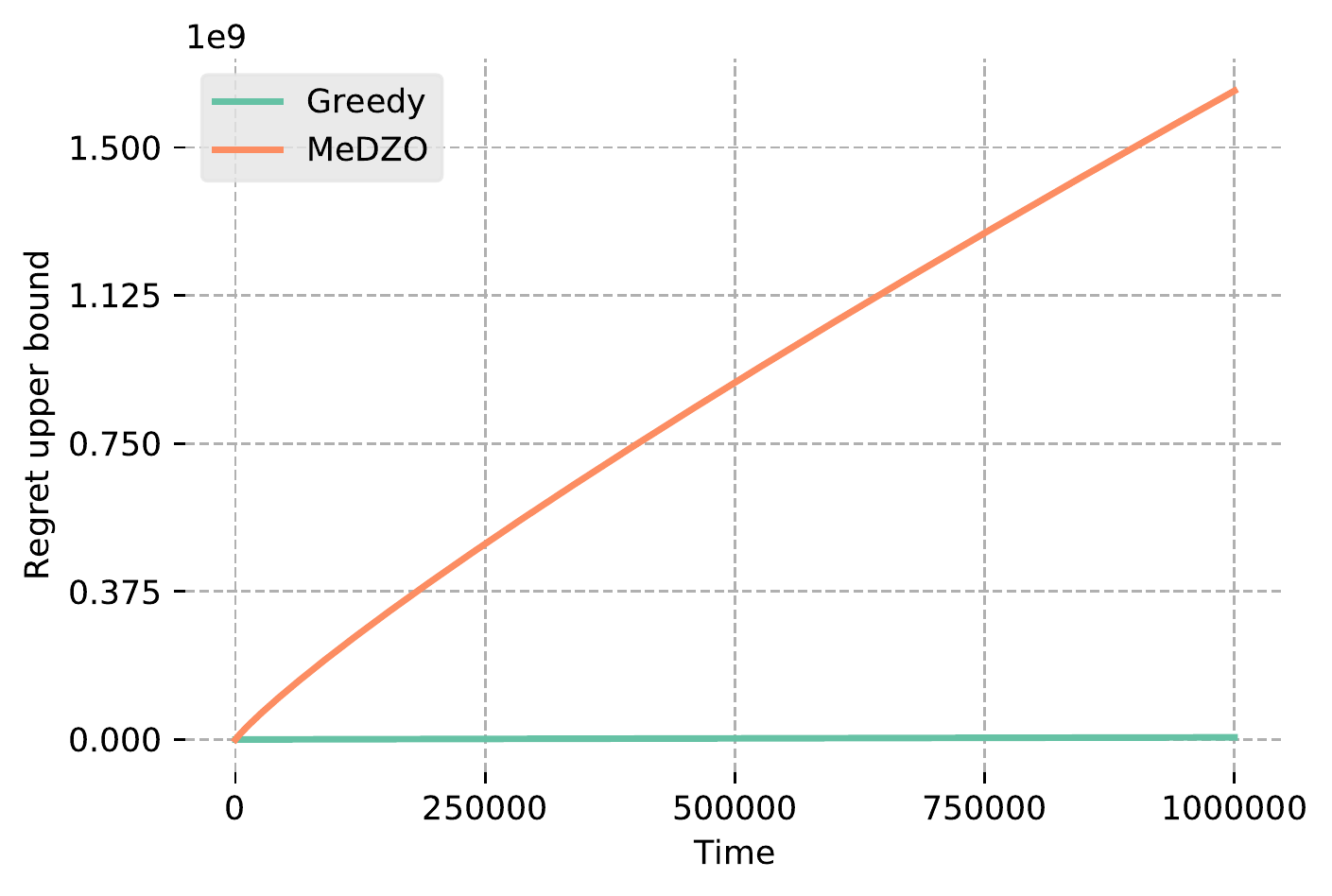}}
  }
\end{figure}

\section{Infinite-armed bandits}

We now study the infinite-armed bandit problem. In this setting, we consider the general model of \citet{wang2009algorithms}. In particular they assume a margin condition on the mean reward of a randomly drawn arm. Formally, 
\begin{assumption} \label{hyp:infinite}
There exist $\mu^\star \in (0,1]$ and $\beta >0$ such that the mean reward $\mu$ of a randomly drawn arm satisfies
    \begin{equation*}
 \mathbb{P}\left( \mu > \mu^\star - \varepsilon \right) = \mathcal{O}\left( \varepsilon^\beta \right) \text{, for } \varepsilon \rightarrow 0 \,.
    \end{equation*}
    Equivalently, there exist $c_1 > 0$ and $c_2 > 0$ such that
\begin{equation*}
    c_1 \varepsilon^\beta \leq \mathbb{P}\left( \mu > \mu^\star - \varepsilon \right) \leq c_2 \varepsilon^\beta \,.
\end{equation*}
\end{assumption}

Similarly to \textsc{UCB-F} \citep{wang2009algorithms}, \Greedy{} will consist of randomly choosing $K$ arms at first and then running \Greedy{} on those arms. The problem is then to choose the optimal number of arms $K$.
The following bound on \Greedy{} assumes the knowledge of the parameter $\beta$ and $c_1$. Its proof is deferred in Appendix \ref{app:proof_infinite}.

\begin{theorem}
\label{thm:infinite}
Assume Assumption \ref{hyp:infinite} of the model. The regret of  \Greedy{}  verifies for any subsampling of $K>0$ arms and for all $\varepsilon > 0$
\begin{equation*}
R_T \leq T \left[ \exp\left( - \frac{c_1}{4} K \varepsilon^{2 + \beta} \right) + \exp\left( - \frac{c_1}{8} K \varepsilon^{\beta} \right) \right] + 4 \varepsilon T + \frac{6 K}{\varepsilon} + K \,.
\end{equation*}
In particular, the choice
\begin{equation*}
    K = \left(2/3 \right)^{(2+\beta)/(4+\beta)} \left( \frac{8}{c_1(4+\beta)} \right)^{2/(4+\beta)} T^{(2+\beta)/(4+\beta)} \left( \log T \right)^{2/(4+\beta)} 
\end{equation*}
yields

\begin{equation*}
R_T\leq 20 \left(c_1(4+\beta)\right)^{-2/(4+\beta)} T^{(3+\beta)/(4+\beta)} \left( \log T \right)^{2/(4+\beta)} \,.
\end{equation*}
\label{thm:infinite_armed}
\end{theorem}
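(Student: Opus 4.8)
The plan is to deduce the first inequality directly from Theorem~\ref{thm:generic_bound_greedy} applied to the random pool of $K$ subsampled arms -- the only subtlety being that this pool need not contain an arm that is near-optimal for the whole instance -- and then to tune $\varepsilon$ and $K$ to extract the explicit rate. So I would fix the draw $\mathcal{S}$ of the $K$ arms, write $\mu^\star_{\mathcal{S}}=\max_{k\in[K]}\mu_k$ for their best mean, and let $N_\varepsilon$ be the number of drawn arms with $\mu_k>\mu^\star-\varepsilon$, where $\mu^\star$ is the value of Assumption~\ref{hyp:infinite}. Conditionally on $\mathcal{S}$, I would decompose the regret as the sum of the ``approximation loss'' $T(\mu^\star-\mu^\star_{\mathcal{S}})$ and the regret of \Greedy{} run on the drawn arms measured against $\mu^\star_{\mathcal{S}}$. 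On the event $\{N_\varepsilon\ge1\}$ one has $\mu^\star-\mu^\star_{\mathcal{S}}<\varepsilon$, and every arm counted by $N_\varepsilon$ also satisfies $\mu_k\ge\mu^\star_{\mathcal{S}}-\varepsilon$, so the number of arms that are $\varepsilon$-optimal relative to $\mu^\star_{\mathcal{S}}$ is at least $N_\varepsilon$; Theorem~\ref{thm:generic_bound_greedy} applied to the drawn instance (its noise is still $1$-subgaussian, its means still lie in $[0,1]$), combined with $\sum_k(\mu^\star_{\mathcal{S}}-\mu_k)\le K$ and the monotonicity of the bound in $N_\varepsilon$, gives a conditional regret of at most $T\exp(-N_\varepsilon\varepsilon^2/2)+4\varepsilon T+6K/\varepsilon+K$ on this event; on $\{N_\varepsilon=0\}$ I would simply use $\mu^\star\le1$, i.e.\ conditional regret $\le T$.

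Next I would average over $\mathcal{S}$; it suffices to treat $\varepsilon\le1$, since otherwise $4\varepsilon T\ge R_T$ already. Under Assumption~\ref{hyp:infinite} each drawn arm is $\varepsilon$-optimal independently with probability $p_\varepsilon\ge c_1\varepsilon^\beta$, so $N_\varepsilon\sim\mathrm{Bin}(K,p_\varepsilon)$. The $\{N_\varepsilon=0\}$ part contributes at most $T(1-p_\varepsilon)^K\le T\exp(-c_1K\varepsilon^\beta)\le T\exp(-\tfrac{c_1}{8}K\varepsilon^\beta)$, and for the remaining part the binomial moment generating function yields $\mathbb{E}[\exp(-N_\varepsilon\varepsilon^2/2)]=(1-p_\varepsilon(1-e^{-\varepsilon^2/2}))^K\le\exp(-p_\varepsilon K(1-e^{-\varepsilon^2/2}))\le\exp(-\tfrac{c_1}{4}K\varepsilon^{2+\beta})$, using $1-e^{-x}\ge x/2$ for $x=\varepsilon^2/2\le 1/2$ together with $p_\varepsilon\ge c_1\varepsilon^\beta$. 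Summing the two contributions reproduces the first displayed bound of the theorem.

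For the explicit rate I would pick $\varepsilon=\bigl(\tfrac{8\log T}{c_1(4+\beta)K}\bigr)^{1/(2+\beta)}$, which turns the first exponential term into $T^{(2+\beta)/(4+\beta)}$ (of strictly smaller order than the target) and, for the prescribed $K$, makes the second exponential term $o(1)$. One is then left with $4\varepsilon T$, $6K/\varepsilon$ and $K$. The announced value $K=(2/3)^{(2+\beta)/(4+\beta)}\bigl(\tfrac{8}{c_1(4+\beta)}\bigr)^{2/(4+\beta)}T^{(2+\beta)/(4+\beta)}(\log T)^{2/(4+\beta)}$ is precisely the one for which this choice of $\varepsilon$ also balances $4\varepsilon T$ against $6K/\varepsilon$; substituting it (and using $\log T\ge1$ to absorb the $K$ term and to tidy up the powers of $\log T$) gives, after collecting numerical constants, $R_T\le 20\,(c_1(4+\beta))^{-2/(4+\beta)}T^{(3+\beta)/(4+\beta)}(\log T)^{2/(4+\beta)}$; rounding $K$ to an integer only changes constants.

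I expect the conditioning step to be where the real content lies: because the arms are subsampled, the implicit exploration of \Greedy{} only helps \emph{relative to the best drawn arm}, so one is forced to separately pay for the event that no near-optimal arm was drawn -- this is exactly what produces the second exponential term $\exp(-\tfrac{c_1}{8}K\varepsilon^\beta)$ and why the analysis must run through the distribution of $N_\varepsilon$ rather than a fixed count. Everything afterwards is concentration bookkeeping and a one-variable optimization, the only genuine nuisance being to keep the final numerical constant down to $20$.
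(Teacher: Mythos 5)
Your proposal is correct and follows essentially the same route as the paper: reduce to Theorem \ref{thm:generic_bound_greedy}, add the $\varepsilon T$ approximation cost for the best subsampled arm, lower-bound the number of $\varepsilon$-optimal sampled arms through Assumption \ref{hyp:infinite} (each sampled arm is $\varepsilon$-optimal with probability at least $c_1\varepsilon^\beta$), and then tune $\varepsilon=\sqrt{3K/(2T)}$ and $K$ exactly as prescribed. The only cosmetic difference is in the concentration step: you integrate the generic bound over the $\mathrm{Bin}(K,p_\varepsilon)$ law of $N_\varepsilon$ via its moment generating function, treating $\{N_\varepsilon=0\}$ separately, whereas the paper conditions on the Chernoff good event $\{N_\varepsilon\geq c_1\varepsilon^\beta K/2\}$ and pays $T$ on its complement -- both yield the same two exponential terms with the same constants.
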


In comparison, the lower bound is this model is $\Omega\left( T^{\beta / (1+\beta)} \right)$ for any $\beta>0$ and $\mu^\star \leq 1$ and \textsc{UCB-F} obtained a $\mathcal{O}\left( T^{\beta/(\beta+1)} \log T \right)$ regret bound in the case $\mu^\star = 1$ or $\beta >1$ and a $\widetilde{\mathcal{O}}\left( T^{1/2} \right)$ bound otherwise \citep{wang2009algorithms}. The regret of \Greedy{} is once again sublinear, though suboptimal, with a lower logarithmic dependency. 
Our numerical simulations will further emphasis its competitive performance.

The case of unknown parameters is more complicated to handle compared to the continuous-armed model and is furthermore not the main focus of this paper. A solution proposed by \citet{carpentier2015simple} nonetheless, is to perform an initial phase to estimate the parameter $\beta$. 
%Also remark that MeDZO can easily be applied to this setting, see \cite{zhu2020multiple}.

\section{Many-armed bandits}

We now consider the particular model of many-armed bandit problem of \citet{zhu2020multiple}. It is somehow related to the previous two except it also takes into account the time horizon. In particular, it focuses on the case where multiple best arms are present. Formally, let $T$ be the time horizon, $n$ be the total number of arms and $m$ be the number of best arms. We emphasis that $n$ can be arbitrary large and $m$ is usually unknown. The following assumption will lower bound the number of best arms.

\begin{assumption} \label{hyp:many}
There exists $\gamma \in [0,1]$ such that the number of best arms satisfies
    \begin{equation*}
        \frac{n}{m} \leq T^\gamma \,.
    \end{equation*}
\end{assumption}

We assume that the value $\gamma$ (or at least some upper-bound) is known in our case, even though adaptivity to it is possible \citep{zhu2020multiple}. The following Theorem bounds the regret of a \Greedy{} algorithm that initially subsamples a set of arms. Its proof is provided in Appendix \ref{app:proof_many}.

\begin{theorem}
\label{thm:many}
Assume Assumption \ref{hyp:many} of the model and that the number of arms $n$ is large enough for the following subsampling schemes to be possible. 
Depending on the value of $\gamma$ and the time horizon $T$, it holds:
\begin{itemize}
    \item If $T^{1-3\gamma} \leq \log T$, in particular for $\gamma \geq \frac{1}{3}$ and $T \geq 2$, choosing $K = 2 T^{2\gamma}\log T$ leads to
    \begin{equation*}
        R_T \leq 14 T^{\gamma + 1/2} \log T + 2 \,.
    \end{equation*}
    \item Otherwise, the choice of $K=2\sqrt{T^{1+\gamma}\log T}$ yields
    \begin{equation*}
        R_T \leq 14 T^{(3+\gamma)/4} \sqrt{\log T} + 2 \,.
    \end{equation*}
\end{itemize}
\label{thm:many_armed}
\end{theorem}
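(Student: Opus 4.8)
The plan is to reduce the analysis to the generic bound of Theorem~\ref{thm:generic_bound_greedy}, applied conditionally on the randomly subsampled set of $K$ arms, and then to average over the subsampling; the case split in the statement is then just the optimal choice of $K$ in two complementary regimes. Concretely, I would first fix the realisation of the $K$ subsampled arms and let $N$ be the number of them whose mean equals $\mu^\star$. Any such arm is $\varepsilon$-optimal for every $\varepsilon>0$, so $N_\varepsilon\ge N$, and trivially $\sum_k\Delta_k\le K$ because $\Delta_k\in[0,1]$. Plugging these two facts into Theorem~\ref{thm:generic_bound_greedy} gives, conditionally on the subsample, $R_T\le T\exp(-N\varepsilon^2/2)+3\varepsilon T+6K/\varepsilon+K$.

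Next I would average out the randomness of $N$. Since the subsample is drawn uniformly without replacement and, by Assumption~\ref{hyp:many}, a fraction $m/n\ge T^{-\gamma}$ of all arms is optimal, $N$ is hypergeometric with mean at least $KT^{-\gamma}$; a Chernoff--Hoeffding tail bound for sampling without replacement gives $N\ge\tfrac12 KT^{-\gamma}$ with probability at least $1-\exp(-\tfrac18 KT^{-\gamma})$, and on the exceptional event I would simply use $R_T\le T$ (the per-round regret never exceeds $\mu^\star\le1$). Taking expectations yields the intermediate inequality
\begin{equation*}
R_T \le T\exp\!\left(-\tfrac14 KT^{-\gamma}\varepsilon^2\right) + T\exp\!\left(-\tfrac18 KT^{-\gamma}\right) + 3\varepsilon T + \frac{6K}{\varepsilon} + K ,
\end{equation*}
which has the same shape as the one behind Theorem~\ref{thm:infinite}. (Alternatively one can average $\exp(-N\varepsilon^2/2)$ directly via the binomial moment generating function together with Hoeffding's convexity comparison between sampling with and without replacement, collapsing the two exponential terms into a single one; nothing below changes.)

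It then remains to tune $K$ and $\varepsilon$, which is a calculus exercise. I would take $\varepsilon=\sqrt{2K/T}$ throughout, which balances the last two explicit terms into $6\sqrt2\,\sqrt{KT}$ and makes the exponent of the first exponential equal to $K^2T^{-1-\gamma}/2$. When $T^{1-3\gamma}\le\log T$ (in particular for $\gamma\ge1/3$ and $T$ not too small), the choice $K=2T^{2\gamma}\log T$ makes both exponents at least $\log T$, so the two exponential terms are each at most $1$; since then $6\sqrt2\,\sqrt{KT}=12\,T^{\gamma+1/2}\sqrt{\log T}$ and $K\le 2T^{\gamma+1/2}\log T$ on the range where the bound is not already vacuous, one gets $R_T\le 14\,T^{\gamma+1/2}\log T+2$ after using $\sqrt{\log T}\le\log T$. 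Otherwise that $K$ is too small to control the first exponential, so instead I would take $K=2\sqrt{T^{1+\gamma}\log T}$; the same bookkeeping gives $6\sqrt2\,\sqrt{KT}=12\,T^{(3+\gamma)/4}(\log T)^{1/4}$ and $K\le 2T^{(3+\gamma)/4}\sqrt{\log T}$, hence $R_T\le 14\,T^{(3+\gamma)/4}\sqrt{\log T}+2$ after bounding $(\log T)^{1/4}\le\sqrt{\log T}$.

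I expect the only delicate step to be the averaging of Step~2: obtaining a clean, constant-friendly control of the number of optimal arms falling in the subsample (and ruling out the event that none does), and then making sure that in each regime the two exponential terms, the requirement $\varepsilon\le1$, and the additive $+K$ term can all be handled simultaneously so that the final constants come out exactly to $14$ and $+2$. Everything else is routine arithmetic with the intermediate inequality, entirely parallel to the proofs of Theorems~\ref{thm:continuous} and~\ref{thm:infinite}.
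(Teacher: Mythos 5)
Your proposal matches the paper's proof essentially step for step: condition on the uniformly subsampled set of $K$ arms, lower-bound the number of optimal arms it contains via a concentration bound for the hypergeometric distribution (the paper invokes Hoeffding's inequality for sampling without replacement, Lemma~\ref{lemma:hoeffding}, with deviation $t=p/2$, $p=m/n\ge T^{-\gamma}$), plug this into Theorem~\ref{thm:generic_bound_greedy} with the optimizing choice $\varepsilon^2=2K/T$, and read off the two regimes from which of the two resulting constraints on $K$ binds, exactly as you do. The only cosmetic difference is your multiplicative Chernoff tail $\exp(-KT^{-\gamma}/8)$ in place of the paper's additive bound $\exp(-KT^{-2\gamma}/2)$ — the latter is what makes the regime-one choice $K=2T^{2\gamma}\log T$ give a bad-event exponent of exactly $\log T$, whereas with your tail the exponent is $\tfrac14 T^{\gamma}\log T$, which can dip below $\log T$ only in corner cases where the stated bound is vacuous anyway, so the conclusion is unaffected.
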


The previous bounds indicate that \Greedy{} realizes a sublinear worst-case regret on the standard multi-armed bandit problem at the condition that the number of arms is large and the proportion of near-optimal arms is high enough.
To compare, the \textsc{MOSS} algorithm run on an optimal subsampling achieves a $\mathcal{O}\left(T^{(1+\gamma)/2}\log T\right)$ regret bound for all $\gamma \in [0, 1]$, which is optimal up to logarithmic factors \citep{zhu2020multiple}.
In this case, our numerical simulation will show that \Greedy{} is competitive even when the setup is close to the limit of the theoretical guarantee of \Greedy{}.

\section{Experiments}

We now evaluate \Greedy{} in the previously studied bandit models to highlight its practical competitive performance. For fairness reasons with respect to the other algorithms, and in the idea of reproducibility, we will not create new experiment setups but reproduce experiments that can be found in the literature (and compare the performances of \Greedy{} w.r.t.\ state of the art algorithms).

\subsection{Continuous-armed bandits}
\label{sec:exp_continuous}

In the continuous-armed bandit setting, we repeat the experiments of \citet{hadiji2019polynomial}. We consider three functions that are gradually sharper at the maxima and thus technically harder to optimize: %specifically,
\begin{align*}
    f_1 : x &\mapsto 0.5 \sin(13x)\sin(27x) + 0.5 \\
    f_2 : x &\mapsto\max\left( 3.6 x (1-x), 1 - | x - 0.05 | / 0.05 \right) \\
    f_3 : x &\mapsto x (1-x) \left( 4 - \sqrt{| \sin{60x} |} \right)
\end{align*}
These functions verify Assumption \ref{hyp:continuous} with $\alpha = 2, 1, 0.5$ and $L\approx221, 20, 2$, respectively, and are plotted for convenience in Appendix \ref{sub:continuous}.
Noises are drawn i.i.d.\ from a standard Gaussian distribution and we consider a time horizon $T=100000$. We compare the \Greedy{} algorithm with \textsc{MeDZO} \citep{hadiji2019polynomial}, \textsc{CAB1} \citep{kleinberg2005nearly} with \textsc{MOSS} \citep{audibert2009minimax,DegenneMoss} as the underlying algorithm and \textsc{Zooming} \citep{kleinberg2008multi}. 
For \Greedy{}, we use the discretization of Corollary \ref{cor:greedy_emp} while for \textsc{CAB.MOSS} we choose the optimal discretization $K=\ceil*{L^{2/(2\alpha+1)} T^{1/(2\alpha+1)}}$. For \textsc{MeDZO}, we choose the parameter suggested by authors $B = \sqrt{T}$. 
We emphasis here that \textsc{CAB.MOSS} and \textsc{Zooming} require the smoothness parameters contrary to \textsc{MeDZO} and \Greedy{}.
Results are averaged over $1000$ iterations and are presented on Figure \ref{fig:exp_continuous}. Shaded area represents 5 standard deviation for each algorithm.

\begin{figure}
  \floatconts
  {fig:exp_continuous}
  {\caption{Regret of various algorithms as a function of time in continuous-armed bandit problems.}}
  {%
    \subfigure[$f_1$]{%\label{fig:circle}%
      \includegraphics[width=0.33\linewidth]{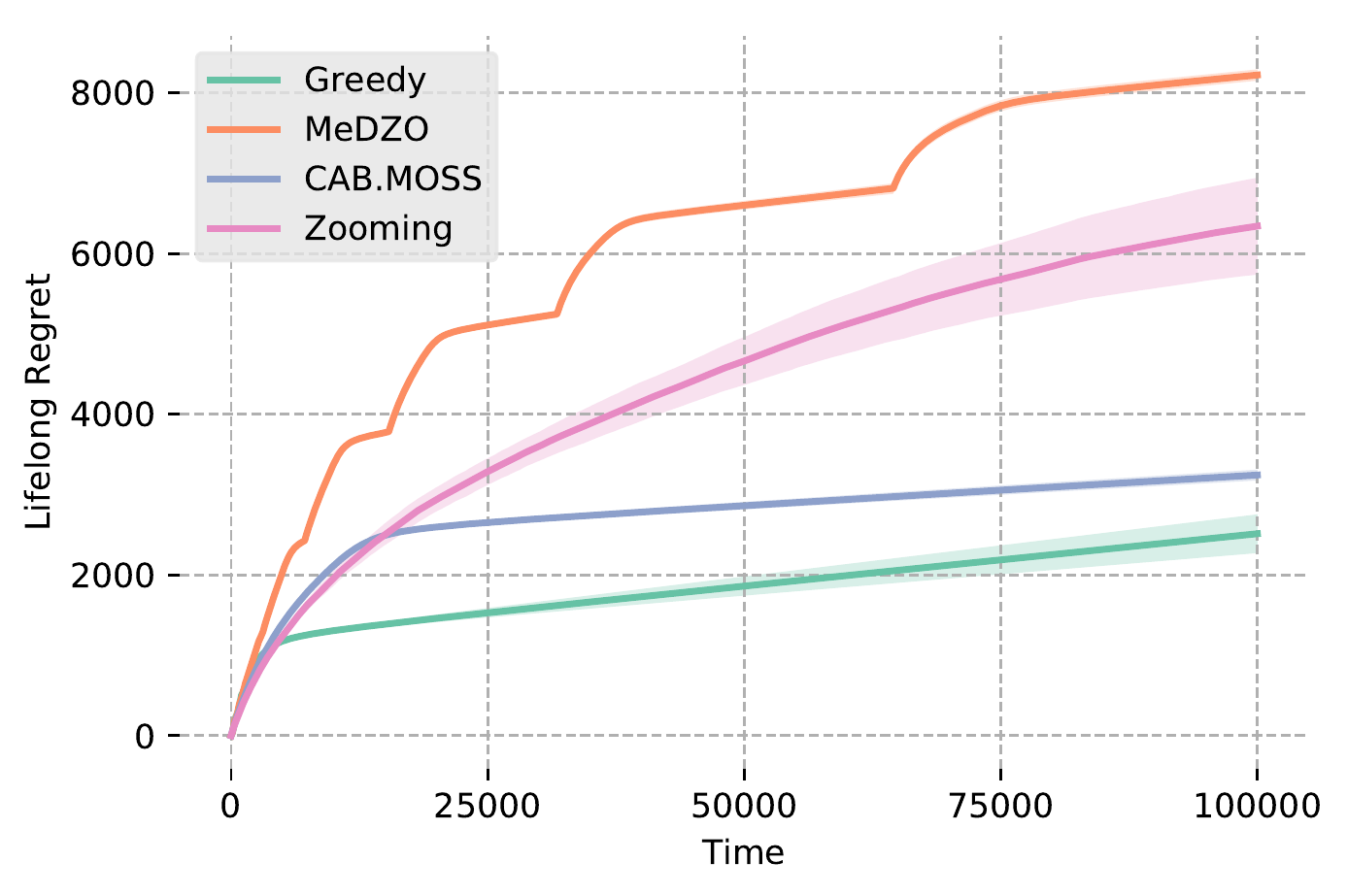}}%
    %\qquad
    \subfigure[$f_2$]{%\label{fig:square}%
      \includegraphics[width=0.33\linewidth]{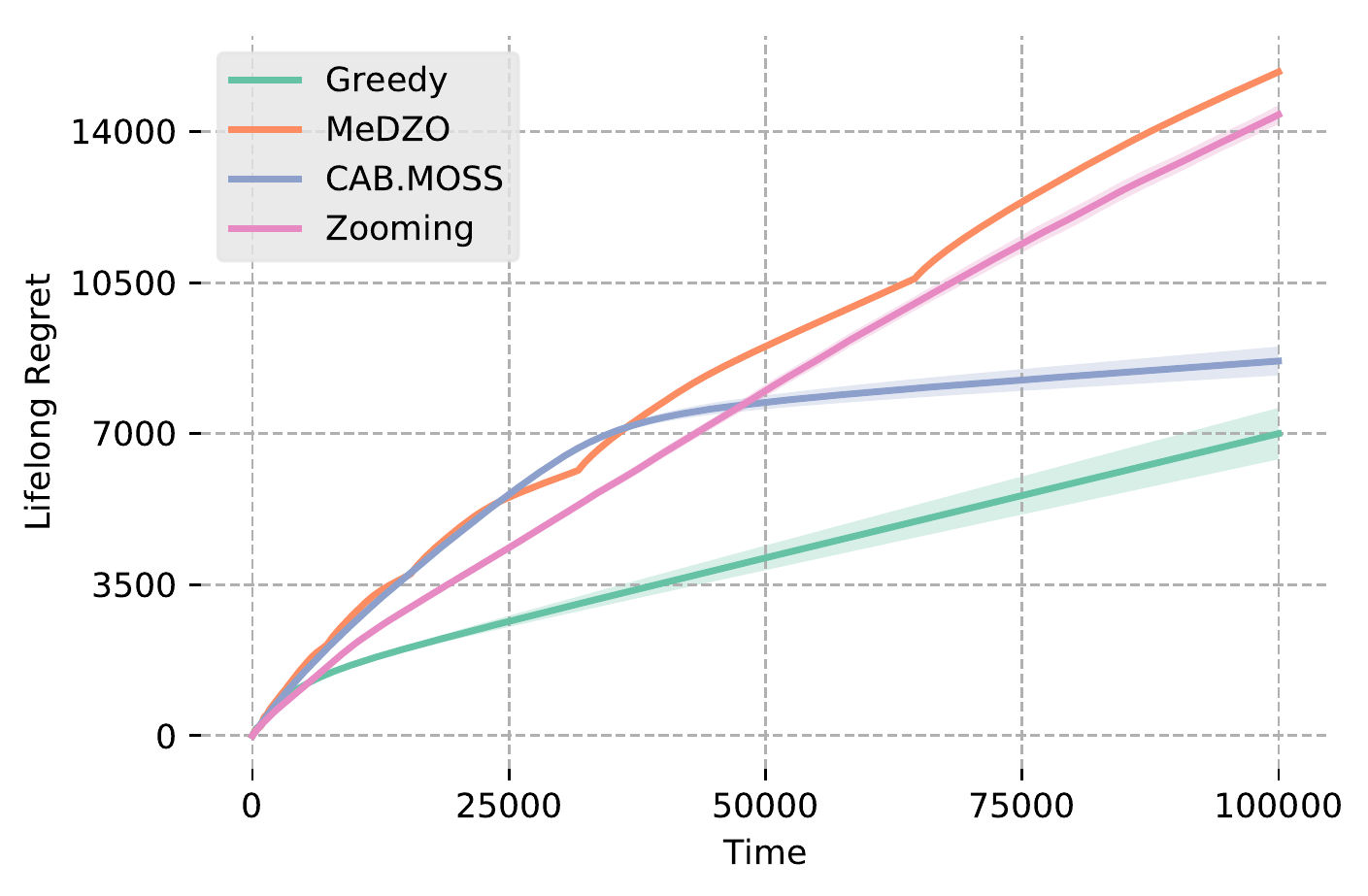}}
    %\qquad
    \subfigure[$f_3$]{%\label{fig:square}%
      \includegraphics[width=0.33\linewidth]{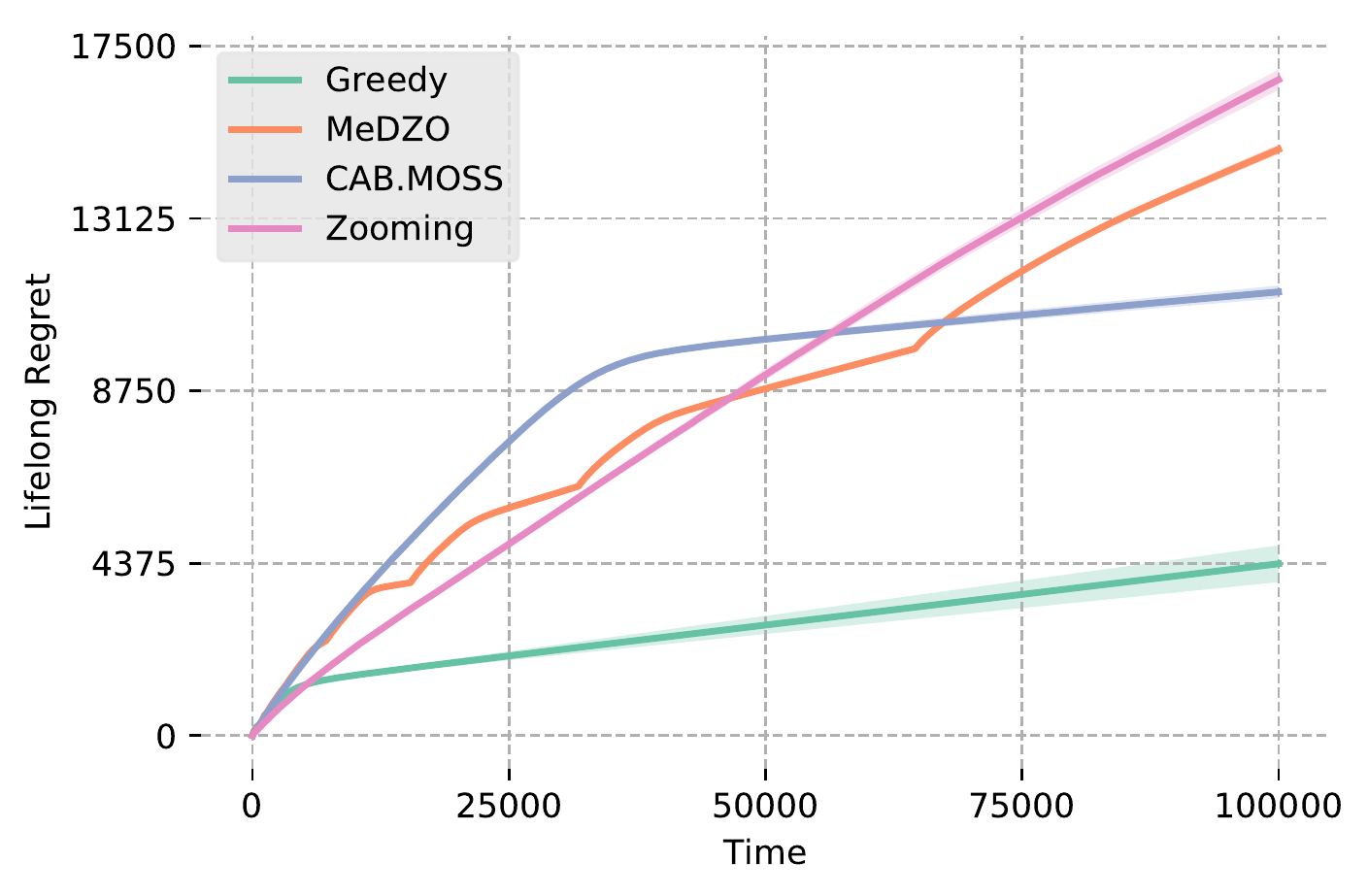}}
  }
\end{figure}

We see that \Greedy{} outperforms the other algorithms in all scenarios. We can clearly observe that the slope of the cumulative regret of \Greedy{} is stepper than the one of \textsc{CAB.MOSS}, yet it manages to obtain a lower regret by quickly concentrating on near-optimal arms. Moreover, the difference is striking for the relatively large time horizon considered here.  
Interestingly, the slope of \Greedy{} is more pronounced in the second scenario; this may be due to the low number of local maxima which negatively affects the number of $\varepsilon$-optimal arms for \Greedy{}.

\subsection{Infinite-armed bandits}

In the infinite-armed bandit setting, we repeat the experiments of \citet{bonald2013two}. We consider two Bernoulli bandit problems with a time horizon $T=10000$. In the first scenario, mean rewards are drawn i.i.d.\ from the uniform distribution over $[0, 1]$, while in the second scenario, they are drawn from a Beta(1, 2) distribution. We assume the knowledge of the parameters. We compare \Greedy{} with \textsc{UCB-F} \citep{wang2009algorithms}, a straightforward extension of \textsc{MeDZO} (analyzed by \citet{zhu2020multiple} in this model) and \textsc{TwoTarget} \citep{bonald2013two} that further assumes Bernoulli rewards and the knowledge of the underlying distribution of mean rewards. For \Greedy{}, we use the subsampling suggested in Theorem \ref{thm:infinite_armed}. Results, averaged over 1000 iterations, are displayed on Figure \ref{fig:exp_infinite} and the shaded area represents 0.5 standard deviation for each algorithm.

\begin{figure}
  \floatconts
  {fig:exp_infinite}
  {\caption{Regret of various algorithms as a function of time in infinite-armed bandit problems.}}
  {%
    \subfigure[Uniform prior]{%\label{fig:circle}%
      \includegraphics[width=0.49\linewidth]{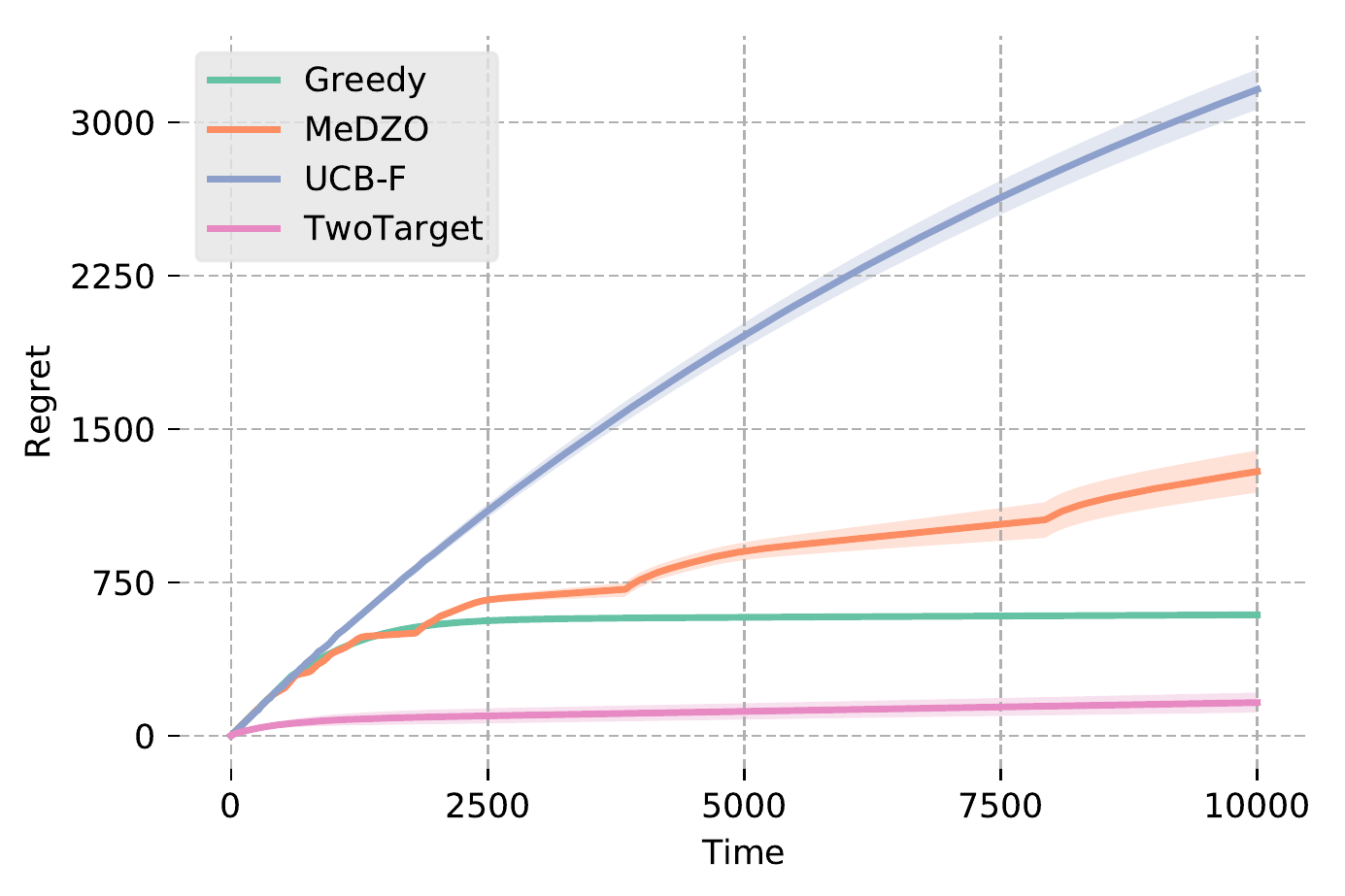}}%
    %\qquad
    \subfigure[Beta(1, 2) prior]{%\label{fig:square}%
      \includegraphics[width=0.49\linewidth]{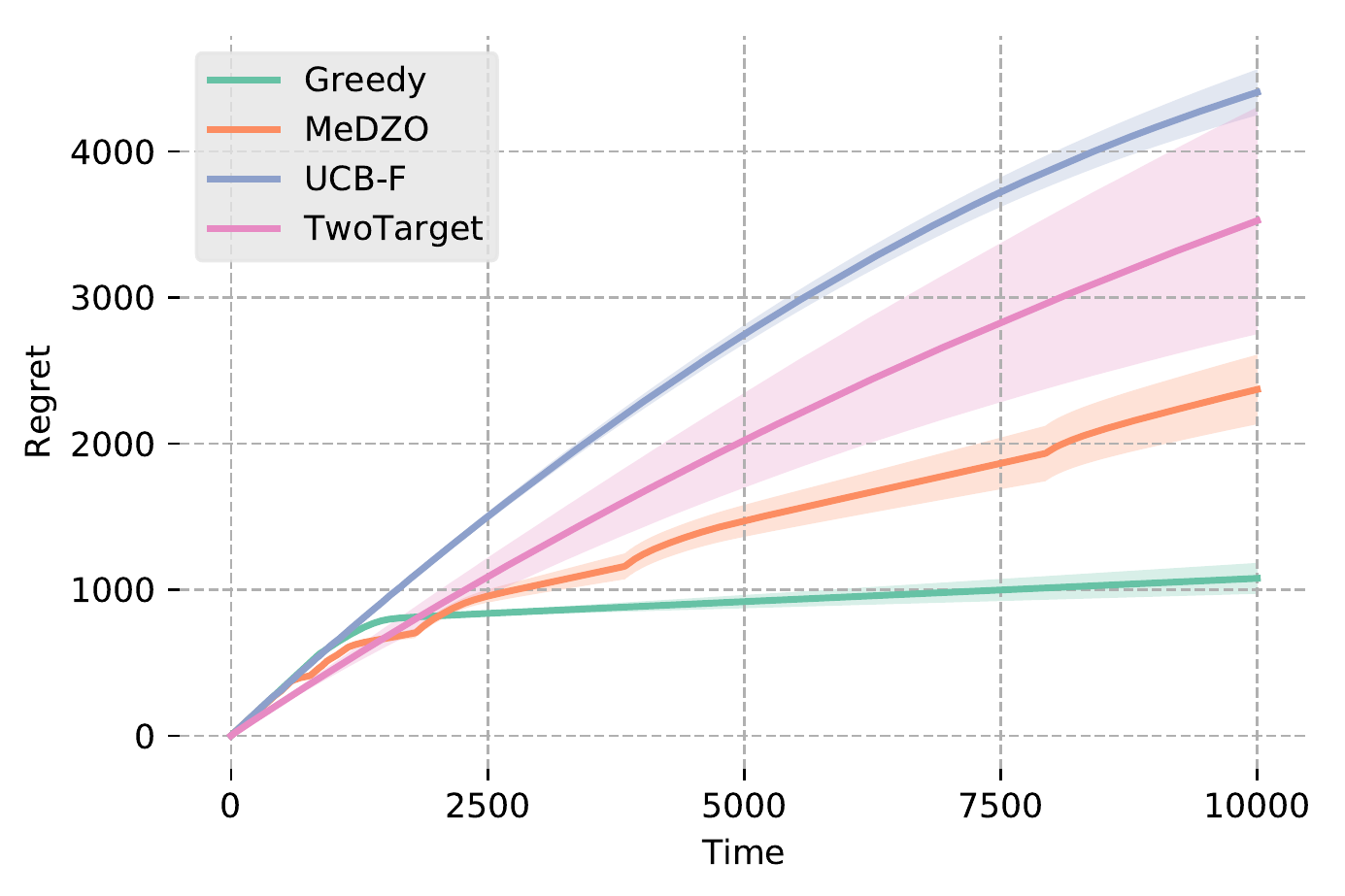}}
  }
\end{figure}

Once again, we see the excellent empirical performances of \Greedy{}. It is actually outperformed by \textsc{TwoTarget} in the uniform case since the latter has been specifically optimize for that case (and is asymptotically optimal) but \Greedy{} is more robust as the second scenario points out; furthermore, \textsc{TwoTarget} works only for Bernoulli rewards contrary to \Greedy{}.

\subsection{Many-armed bandits}

In the many-armed bandit setting, we repeat the experiment of \citet{zhu2020multiple}. 
We consider a Bernoulli bandit problem where best arms have an mean reward of  $0.9$ while for suboptimal arms they are evenly distributed among $\{0.1,0.2,0.3,0.4,0.5\}$. The time horizon is $T = 5000$ and the total number of arms $n = 2000$. We set the hardness level at $\gamma = 0.4$ resulting in a number of best arms $m = \ceil*{\frac{n}{T^\gamma}} = 64$. In this setup, \Greedy{} is near its limit in terms of theoretical guarantee.
We compare \textsc{OracleGreedy}, the greedy algorithm run on an subsampling of arms analyzed previously, with \textsc{MOSS} \citep{audibert2009minimax}, \textsc{OracleMOSS} \citep{zhu2020multiple} (which consider an optimal subsampling for \textsc{MOSS}), \textsc{MeDZO} \citep{hadiji2019polynomial, zhu2020multiple} and the standard \Greedy{} algorithm that consider all arms. For \textsc{OracleGreedy}, we consider a subsampling of $K=(1-2\gamma) T^{2\gamma} \log T / 4$ arms, which corresponds to the value of a more careful analysis of the regret in the bad events in Theorem \ref{thm:many_armed} for 1/4-subgaussian random variables.
Results are averaged over 5000 iterations and displayed on Figure \ref{fig:many_armed}. Shaded area represents 0.5 standard deviation for each algorithm.

\begin{figure}
    \centering
    \includegraphics[width=0.5\linewidth]{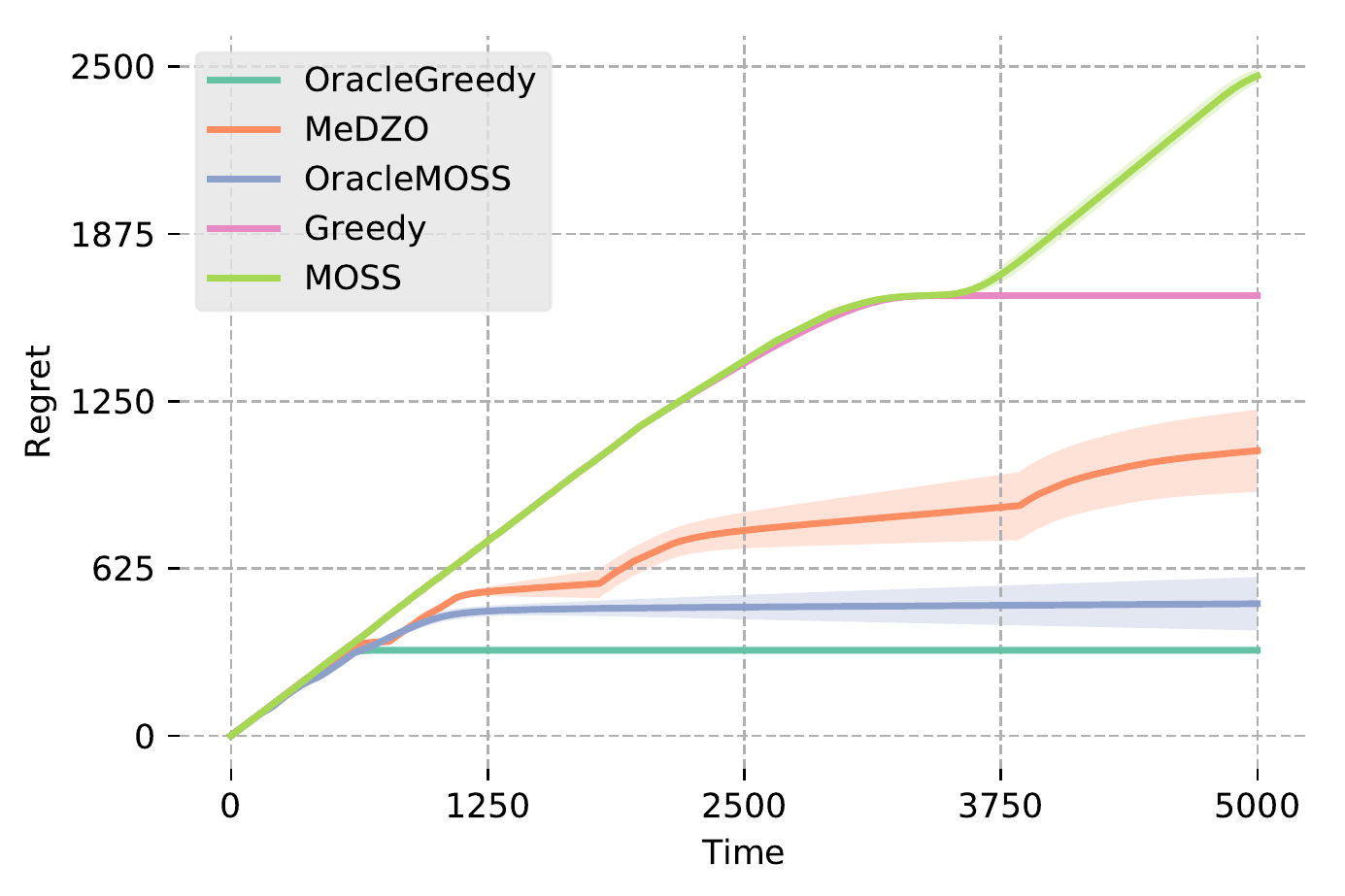}
    \caption{Regret of various algorithms on a many-armed bandit problem with hardness  $\gamma=0.4$.}
    \label{fig:many_armed}
\end{figure}

Once again we observe the excellent performance of \Greedy{} on a subsampling of arms; it outperforms \textsc{OracleMOSS}, its closest competitor, since both assume the knowledge of the hardness parameter $\gamma$ and subsample. It is also interesting to notice that the variance of \textsc{OracleGreedy} is much smaller than \textsc{OracleMOSS}.

\section{Conclusion}

In this paper, we have refined the standard version of \Greedy{} by considering a subsampling of arms and proved sublinear worst-case regret bounds in several bandit models. We also carried out an extensive experimental evaluation which reveals that it outperforms the state-of-the-art for relatively short time horizon. 
Besides, since its indexes are usually computed by most algorithms, it is trivial to implement and fast to run.
Consequently, the \Greedy{} algorithm should be considered as a standard baseline when multiple near-optimal arms are present, which is the case in many models as we saw.

\paragraph{Interesting Direction} We leave open the question of adaptivity.
Adaptivity here could refer to adaptive subsampling or adaptivity to unknown parameters. In particular in the continuous-armed bandit problem, previous work showed that the learner pays a polynomial cost to adapt \citep{hadiji2019polynomial}. Knowing that \Greedy{} works best for relatively short time horizon, it might be interesting to study this cost for a greedy strategy and for what time horizon it might be worth it. 

Another interesting, and relevant in practical problems, direction is to analyze the performance of \Greedy{} in combinatorial bandits (with a large number of  arms and thus a non-tractable number of \textsl{actions}), but with some structure on the rewards on arms \citep{DegenneCB,Perrault2,Perrault1}.

% Acknowledgments---Will not appear in anonymized version
\acks{The research presented was supported by the French National Research Agency, under the project BOLD (ANR19-CE23-0026-04) and it was also supported in part by a public grant as part of the Investissement d'avenir project, reference ANR-11-LABX-0056-LMH, LabEx LMH, in a joint call with Gaspard Monge Program for optimization, operations research and their interactions with data sciences.
}

\bibliography{references}

\newpage

\appendix

\section{Additional figures}
%\label{sec:fig_plus}

This section provides illustrations that we did not include in the article in order to not overload it.

\subsection{Failure of \Greedy{}}
\label{sub:failure}

Here, we illustrate Example \ref{ex:failure}, that is the failure of \Greedy{}. We recall that we considered a Bernoulli bandit problem consisting of $K=2$ arms with mean rewards $0.9$ and $0.1$ respectively. In Figure \ref{fig:failure}, we compare the regret of \Greedy{} with the \textsc{Thompson Sampling} algorithm \citep{thompson1933likelihood}.

\begin{figure}
    \centering
    \includegraphics[width=0.5\linewidth]{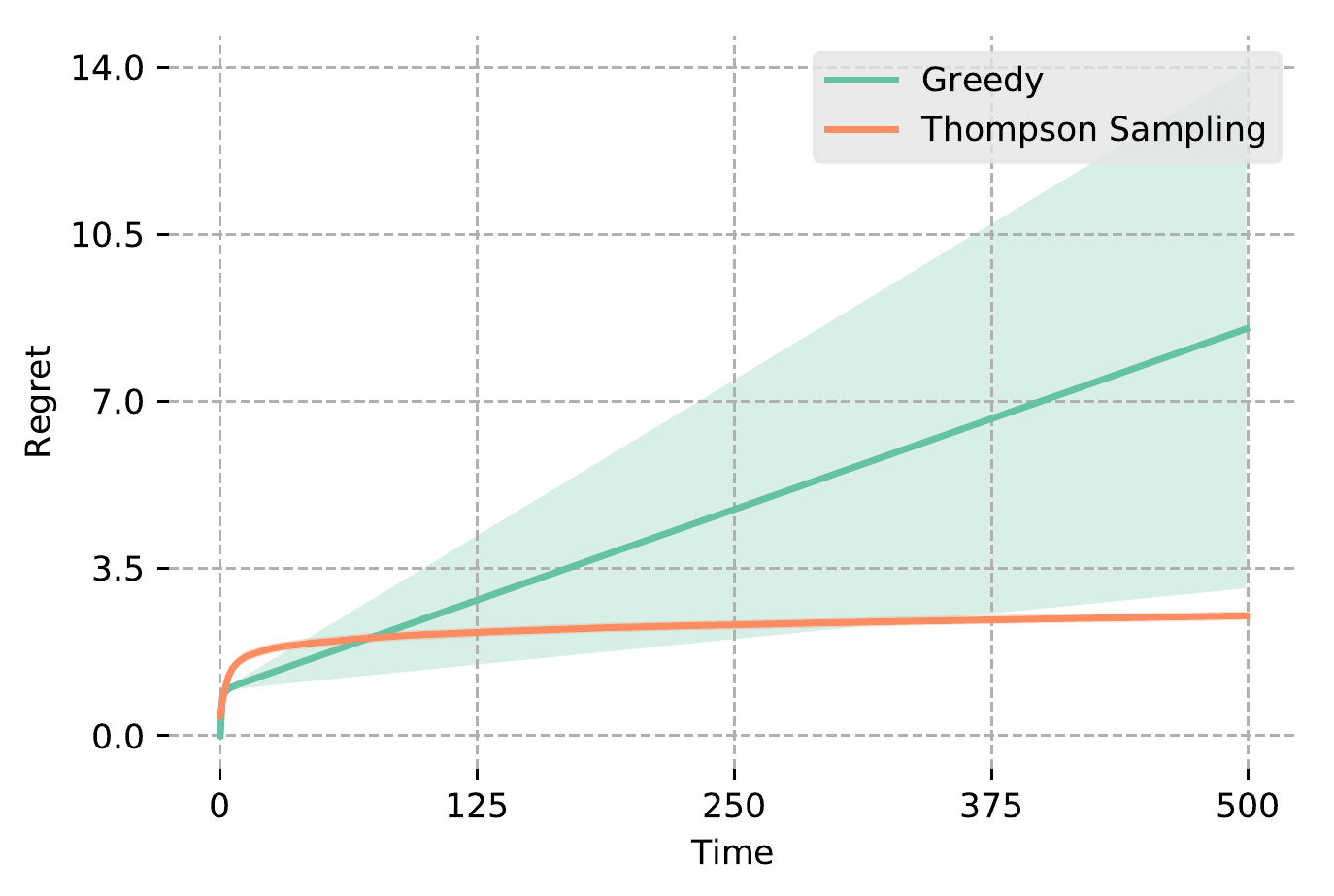}
    \caption{Regret of various algorithms as a function of time in a Bernoulli bandit problem. Results are averaged over $1000$ runs and the shaded area represents $0.1$ standard deviation.}
    \label{fig:failure}
\end{figure}

\subsection{Continuous functions studied}
\label{sub:continuous}

This section provides the plots of the studied functions in Subsection \ref{sec:exp_continuous}. These functions, displayed on Figure \ref{fig:continuous_functions}, are recalled below for convenience.

\begin{align*}
    f_1 : x &\mapsto 0.5 \sin(13x)\sin(27x) + 0.5, \\
    f_2 : x &\mapsto\max\left( 3.6 x (1-x), 1 - | x - 0.05 | / 0.05 \right)  \\  
    f_3 : x &\mapsto x (1-x) \left( 4 - \sqrt{| \sin{60x} |} \right)
\end{align*}

\begin{figure}
  \floatconts
  {fig:continuous_functions}
  {\caption{Functions considered in the continuous-armed bandit experiments.}}
  {%
    \subfigure[$f_1$]{%\label{fig:circle}%
      \includegraphics[width=0.33\linewidth]{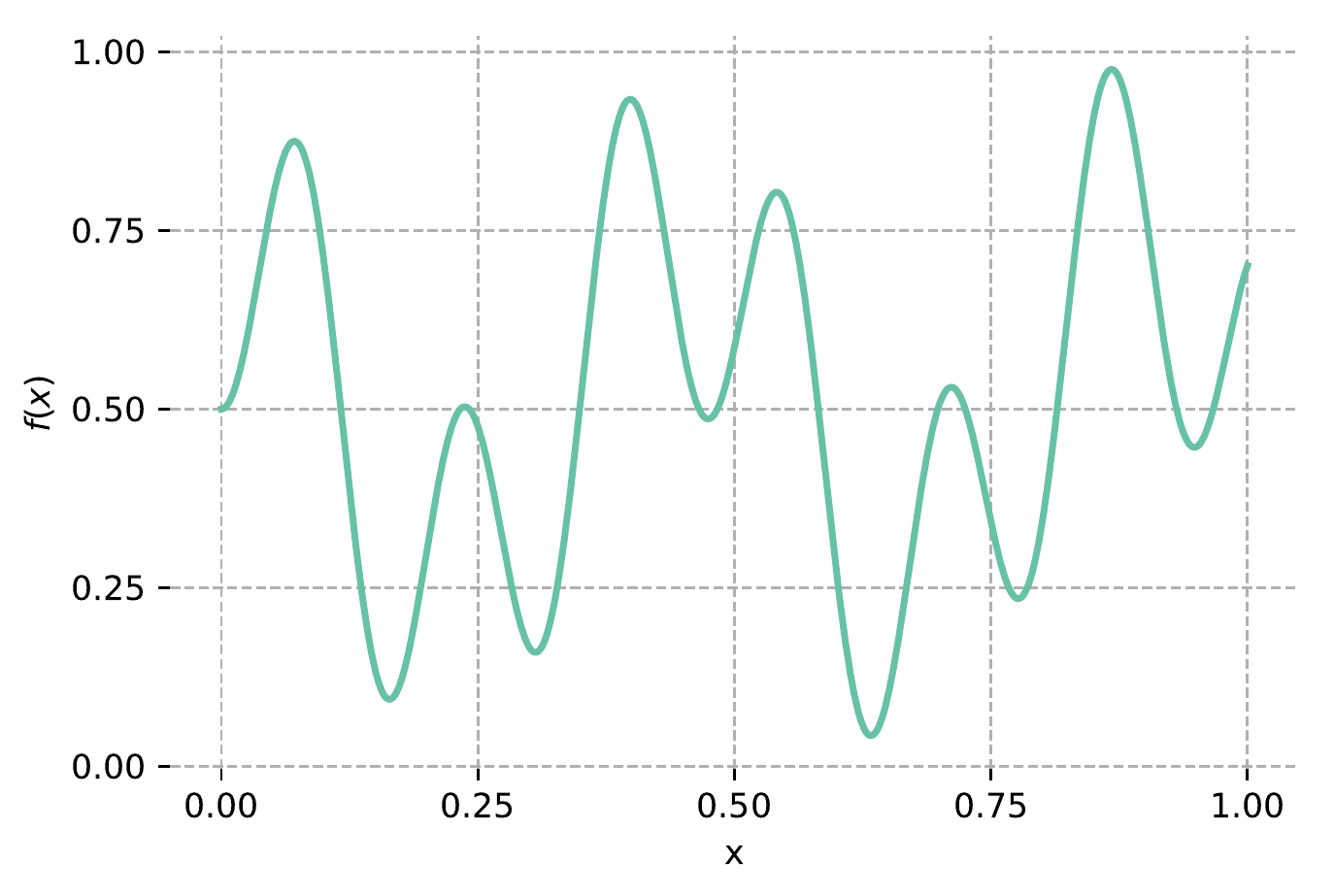}}%
    %\qquad
    \subfigure[$f_2$]{%\label{fig:square}%
      \includegraphics[width=0.33\linewidth]{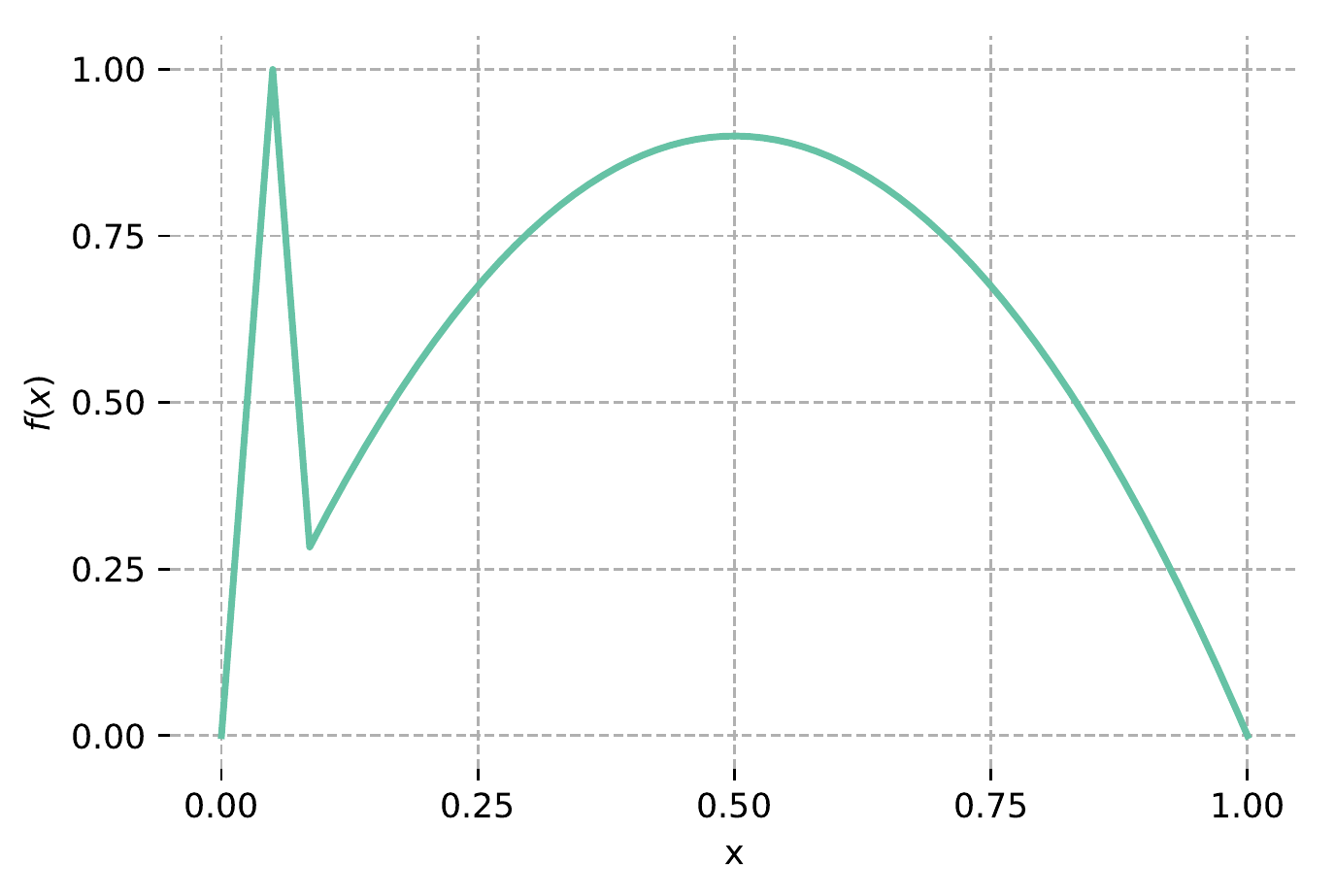}}
    %\qquad
    \subfigure[$f_3$]{%\label{fig:square}%
      \includegraphics[width=0.33\linewidth]{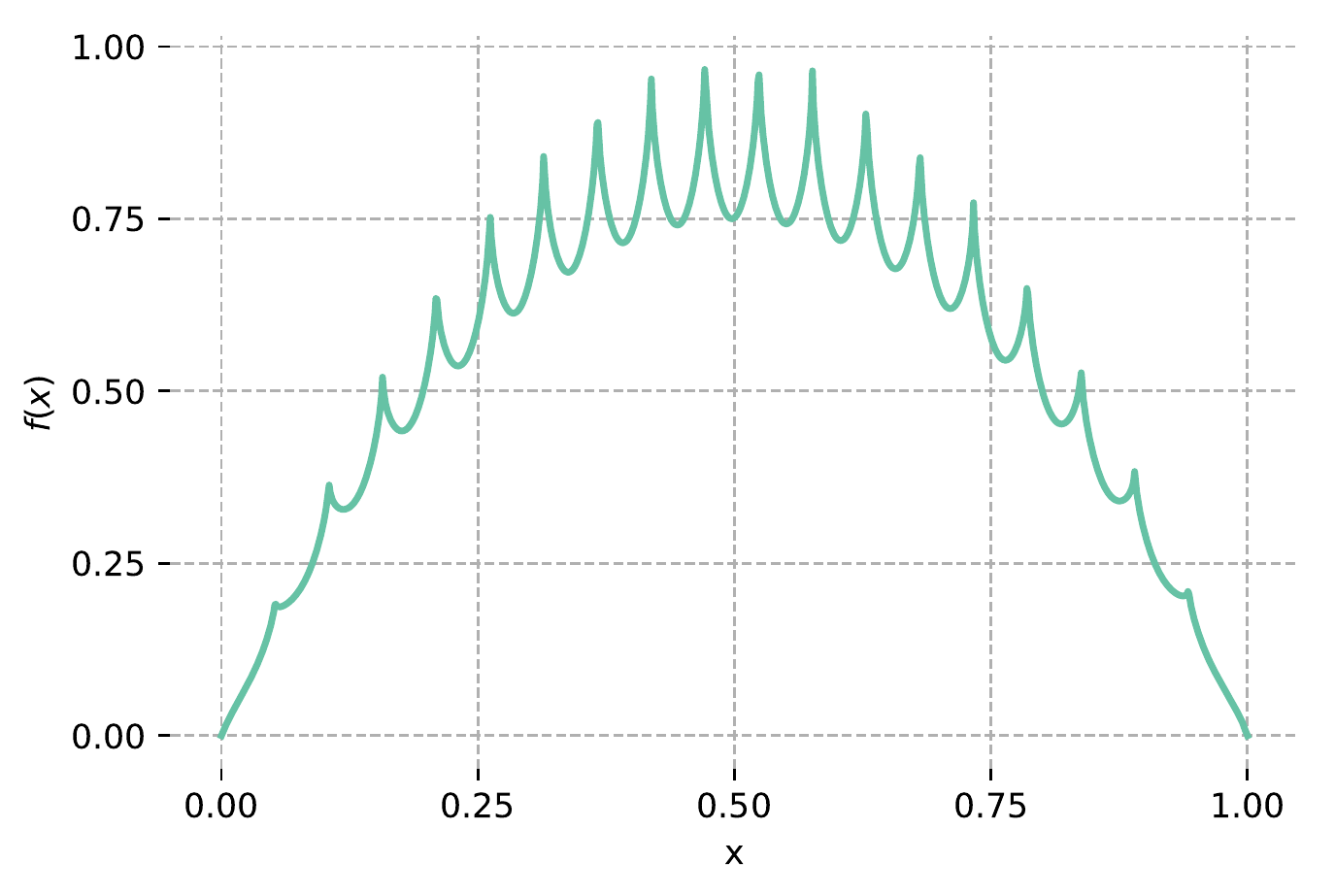}}
  }
\end{figure}

\section{Proofs of Section \ref{sec:generic}}
\subsection{Proof of Theorem \ref{thm:generic_bound_greedy}}
\label{app:proof_generic_bound}

The proof combines two techniques standard in the literature: creating a ``good'' event in order to distinguish the randomness of the distributions from the behavior of the algorithm and decomposing the arms into near-optimal and suboptimal ones.
Fix some $\varepsilon > 0$.

\paragraph{Good event}
Define the event $\mathfrak{E}$, through its complement, by
\begin{equation*}
   \mathfrak{E}^c = \bigcap_{k: \Delta_k \leq \varepsilon} \mathbb{P}\left( \exists t\, |\, \widehat{\mu}_k(t) \leq \mu_k - \varepsilon \right) \,.
\end{equation*}
In words, $\mathfrak{E}$ is the event that at least one $\varepsilon$-optimal arm is never underestimated by more than $\varepsilon$ below its mean reward. Using the independence of the events along with the concentration bound of \citet{bayati2020optimal}, see Lemma \ref{lemma:concentration_bound}, we obtain
\begin{equation}
    \mathbb{P}\left(\mathfrak{E}^c\right) \leq \exp\left( - N_\varepsilon \frac{\varepsilon^2}{2} \right) \,.
    \label{eq:proof_bad_event}
\end{equation}
%where we recall that $N_\varepsilon$ denotes the number of $\varepsilon$-optimal arms.

\paragraph{Bound on the number of pulls of suboptimal arms}
On the event $\mathfrak{E}$, let $k \in [K]$ be an arm such that $\Delta_k > 3 \varepsilon$. With a slight abuse of notation, we denote by $\widehat{\mu}_k^t$ the average reward of arm $k$ after $t$ samples. The expected number of pulls of arm $k$ is then bounded by
\begin{align}
    \mathbb{E}\left[ N_k(T)\, |\, \mathfrak{E} \right] &\leq 1 + \sum_{t=1}^\infty \mathbb{P}\left( \widehat{\mu}_k^t \geq \mu^\star - 2\varepsilon \right) \nonumber \\ 
    &= 1 + \sum_{t=1}^\infty \mathbb{P}\left( \widehat{\mu}_k^t - \mu_k \geq \Delta_k - 2\varepsilon \right) \nonumber \\
    &\leq 1 + \sum_{t=1}^\infty \exp\left(- t \frac{(\Delta_k - 2\varepsilon)^2}{2} \right) \nonumber \\
    &= 1 + \frac{1}{\exp\left( \frac{(\Delta_k - 2\varepsilon)^2}{2} \right) - 1} \nonumber \\
    &\leq 1 + \frac{2}{(\Delta_k - 2\varepsilon)^2}
    \label{eq:proof_nb_pulls_sub}
\end{align}
where in second inequality we use Lemma \ref{lemma:concentration_bound_mean_subgauss} since $\widehat{\mu}_k^t$ is $1/t$-subgaussian and in the last inequality we used that $e^x \geq 1+x$ for all $x \in \mathbb{R}$.

\paragraph{Putting things together} We first decompose the regret according to the event $\mathfrak{E}$
\begin{equation}
    R_T \leq \mathbb{E}\left[R_T | \mathfrak{E}^c \right] \mathbb{P}\left(\mathfrak{E}^c\right)  + \mathbb{E}\left[R_T | \mathfrak{E} \right] \,.
    \label{eq:proof_decomp_event}
\end{equation}
As mean rewards are bounded in $[0, 1]$, the regret on the bad event is bounded by $T$ and by Equation \eqref{eq:proof_bad_event} we have
\begin{equation*}
    \mathbb{E}\left[R_T | \mathfrak{E}^c \right] \mathbb{P}\left(\mathfrak{E}^c\right) \leq T \exp\left( - N_\varepsilon \frac{\varepsilon^2}{2} \right) \,.
\end{equation*}
We further decompose the second term on the right-hand side of Equation \eqref{eq:proof_decomp_event},
\begin{equation*}
    \mathbb{E}\left[R_T | \mathfrak{E} \right] \leq \sum_{k:\Delta_k \leq 3 \varepsilon} \Delta_k \mathbb{E}\left[ N_k(T) | \mathfrak{E} \right] + \sum_{k:\Delta_k > 3 \varepsilon} \Delta_k \mathbb{E}\left[ N_k(T) | \mathfrak{E} \right] \,.
\end{equation*}
The first term is trivially bounded by $3\varepsilon T$, while for the second term we have by Equation \eqref{eq:proof_nb_pulls_sub}, 
\begin{align*}
    \sum_{k:\Delta_k > 3 \varepsilon} \Delta_k \mathbb{E}\left[ N_k(T) | \mathfrak{E} \right] &\leq \sum_{k:\Delta_k > 3 \varepsilon} \frac{2 \Delta_k}{(\Delta_k - 2\varepsilon)^2} + \sum_{k=1}^K \Delta_k \\
    &\leq \sum_{k:\Delta_k > 3 \varepsilon} \frac{6}{(\Delta_k - 2\varepsilon)} + \sum_{k=1}^K \Delta_k \\
    &\leq \sum_{k:\Delta_k > 3 \varepsilon} \frac{6}{\varepsilon} + \sum_{k=1}^K \Delta_k \leq \frac{6K}{\varepsilon} + \sum_{k=1}^K \Delta_k
\end{align*}
where in the second inequality we used that $\Delta_k \leq 3 (\Delta_k - 2 \varepsilon)$, which holds true since $\Delta_k \geq 3\varepsilon$.
Hence the result.
\subsection{Proof of Corollary \ref{CR:chaining}}
\label{CR:proof_chaining}

We recall the definition of the event $\mathfrak{E}_\varepsilon$, through its complement $\mathfrak{E}^c_\varepsilon$,
\begin{equation*}
    \mathfrak{E}^c_\varepsilon = \bigcap_{k: \Delta_k \leq \varepsilon} \mathbb{P}\left( \exists t\, |\, \widehat{\mu}_k(t) \leq \mu_k - \varepsilon \right) \,.
\end{equation*}
Consider any increasing sequence $\left\{ \varepsilon_m \right\}_{m=0}^M$ and denote $\mathfrak{E}_m$ the good event associated with $\varepsilon_m$ for $m \in \{0, \ldots, M \}$. By the chain rule and the previous computation of the regret on the good event (see proof of Theorem \ref{thm:generic_bound_greedy}), we have
\begin{align*}
    R_T &\leq \left( 3 \varepsilon_0 T + \frac{6 K}{\varepsilon_0} \right) \mathbb{P}\left( \mathfrak{E}_0 \right) + \left( 3 \varepsilon_1 T + \frac{6 K}{\varepsilon_1} \right) \mathbb{P}( \mathfrak{E}_1 \cap \mathfrak{E}_0^c ) + \ldots \\
    &\quad + \left( 3 \varepsilon_M T + \frac{6 K}{\varepsilon_M} \right) \mathbb{P}( \mathfrak{E}_M \cap \mathfrak{E}_{M-1}^c ) + T \mathbb{P}( \mathfrak{E}_{M-1}^c ) + \sum_{k=1}^K \Delta_k \\
    &\leq \left[\left( 3 \varepsilon_0 T + \frac{6 K}{\varepsilon_0} \right) - \left( 3 \varepsilon_1 T + \frac{6 K}{\varepsilon_1} \right) \right] \mathbb{P}\left( \mathfrak{E}_0 \right) + \ldots \\
    &\quad + \left[\left( 3 \varepsilon_{M-1} T + \frac{6 K}{\varepsilon_{M-1}} \right) - \left( 3 \varepsilon_M T + \frac{6 K}{\varepsilon_M} \right) \right] \mathbb{P}\left( \mathfrak{E}_{M-1} \right) \\ 
    &\quad + \left( 3 \varepsilon_M T + \frac{6 K}{\varepsilon_M} \right) \mathbb{P}\left( \mathfrak{E}_M \right) +  T \mathbb{P}( \mathfrak{E}_{M}^c ) + \sum_{k=1}^K \Delta_k
\end{align*}
where in the second inequality we used that $\mathbf{1}\{\mathfrak{A} \cap \mathfrak{B}^c\} = \mathbf{1}\{\mathfrak{A}\} - \mathbf{1}\{\mathfrak{B}\}$ if $\mathfrak{B} \subset \mathfrak{A}$. 
In the proof of Theorem \ref{thm:generic_bound_greedy}, we show that
\begin{equation*}
    \mathbb{P}( \mathfrak{E}_{m}^c ) \leq \exp\left( - N_{\varepsilon_m} \frac{\varepsilon^2_m}{2}\right)
\end{equation*}
for $m \in \{0, \ldots, M \}$. Hence we obtain
\begin{align*}
    R(T) &\leq \left( 3 \varepsilon_0 T + \frac{6 K}{\varepsilon_0} \right) \\
    &\quad + \sum_{m=0}^{M-1} \left[\left( 3 \varepsilon_{m+1} T + \frac{6 K}{\varepsilon_{m+1}} \right) - \left( 3 \varepsilon_m T + \frac{6 K}{\varepsilon_m} \right) \right] \exp\left( - N_{\varepsilon_m} \frac{\varepsilon^2_m}{2} \right)  \\
    &\quad + T \exp\left( - \frac{K}{2} \right) + \sum_{k=1}^K \Delta_k 
\end{align*}
The middle term is upper-bounded by
$$\sum_{m=0}^{M-1}(\varepsilon_{m+1}-\varepsilon_m) \left[3 T + \frac{6 K}{\varepsilon^2_{m}}  \right] \exp\left( - N_{\varepsilon_m} \frac{\varepsilon^2_m}{2} \right),
$$
which converges, as the mesh of the sequence $\varepsilon_m$ goes to zero, towards 
\begin{equation*}
     \int_{\varepsilon}^1 \left(3  T + \frac{6 K}{x^2}\right) \exp\left( - N_x \frac{x^2}{2} \right) dx
\end{equation*}
Hence the result.

\section{Proof of Theorem \ref{thm:continuous}}
\label{app:proof_continuous}

Let $\varepsilon > 0$.
The regret can be decomposed into an approximation and an estimation term,
\begin{equation*}
    T f(x^\star) - \sum_{t=1}^T f(x_t) = T \left(f(x^\star) - \max_{k \in [K]} f\left(\frac{k}{K}\right) \right) + \left( T \max_{k \in [K]} f\left(\frac{k}{K}\right) - \sum_{t=1}^T f(x_t) \right) \,.
\end{equation*}
By Assumption \ref{hyp:continuous}, the first term is bounded by $\varepsilon T$ when $K \geq \left( \frac{L}{\varepsilon}\right)^{1/\alpha}$. Then, according to Theorem \ref{thm:generic_bound_greedy}, we just have to lower bound $N_\varepsilon$ to conclude. To do so, we prove a lower bound on the number of arms that are $\varepsilon$-optimal with respect to the best arm overall. Let $N_\varepsilon^C$ denotes this quantity.

\paragraph{Bound on $N_\varepsilon^C$}
By Assumption \ref{hyp:continuous}, an $\varepsilon$-optimal arm $k$ may verify (there can be $\varepsilon$-optimal that are not around the maxima)
\begin{equation*}
   L \left| x^\star - k/K \right|^\alpha \leq \varepsilon \,.
\end{equation*}
Knowing that $k$ is an integer, we obtain
\begin{equation*}
    \ceil*{K\left( x^\star - \left( \frac{\varepsilon}{L} \right)^{1/\alpha} \right)} \leq k \leq \floor*{K\left( x^\star + \left( \frac{\varepsilon}{L} \right)^{1/\alpha} \right)} \,.
\end{equation*}
This means that we have the following lower bound on $N_\varepsilon^C$
\begin{equation*}
    N_\varepsilon^C \geq \floor*{K\left( x^\star + \left( \frac{\varepsilon}{L} \right)^{1/\alpha} \right)} - \ceil*{K\left( x^\star - \left( \frac{\varepsilon}{L} \right)^{1/\alpha} \right)} + 1 \,.
\end{equation*}
Thanks to Lemma \ref{lemma:1}, we obtain
\begin{equation*}
   N_\varepsilon^C \geq \floor*{2K\left( \varepsilon/L \right)^{1/\alpha}} \,.
\end{equation*}
Finally, using that $\floor*{2x} \geq x$ for $x \geq 1$ (easily verify with the assumption on $K$), we obtain the following lower bound
\begin{equation*}
N_\varepsilon^C \geq K\left( \varepsilon/L \right)^{1/\alpha} \,.
\end{equation*}

\paragraph{Conclusion}
We trivially have that $N_{\varepsilon} \geq N_\varepsilon^C$. 
The first part of the Theorem then results from the fact that $\sum_{k=1}^K \Delta_k \leq K$ since $\mu_k \in [0, 1]$ for all $k \in [K]$. 
On the other hand, the second part comes from taking $\varepsilon^2 = 3K/(2T)$ which is the value of $\varepsilon$ that minimizes the term $4 \varepsilon T + 6 K/\varepsilon$.
\section{Proof of Theorem \ref{thm:infinite}}
\label{app:proof_infinite}

Let $\varepsilon > 0$. Once again, thanks to Theorem \ref{thm:generic_bound_greedy} we just have to bound $N_\varepsilon$ and the result will follow by adding the approximation cost $\varepsilon T$. 
%\paragraph{Bound on $N_\varepsilon$}
We construct a good event on the expected rewards of sampled arms.
Let $I_\varepsilon = [\mu^\star - \varepsilon, \mu^\star]$ and $N_\varepsilon^I = \sum_{k=1}^K \mathbf{1}\{ k \in I_\varepsilon \}$ be the number of $\varepsilon$-optimal arms with respect to all arms. Assumption \ref{hyp:infinite} implies that
\begin{equation*}
    p = \mathbb{E}\left[ \mathbf{1}\{ k \in I_\varepsilon \} \right] = \mathbb{P}(k \in I_\varepsilon) \in [c_1 \varepsilon^\beta, c_2 \varepsilon^\beta] \,.
\end{equation*}
Let $\delta \in [0, 1)$. By Chernoff inequality we have
\begin{equation*}
   \mathbb{P}\left( N_\varepsilon^I < (1-\delta) Kp \right) \leq \exp\left( - Kp \delta^2/2 \right) \,.
\end{equation*}
In particular, taking $\delta = \frac{1}{2}$ yields
\begin{equation*}
 \mathbb{P}\left( N_\varepsilon^I < c_1 \varepsilon^\beta K/2 \right) \leq \exp\left( - c_1 \varepsilon^\beta K/8 \right) \,.
\end{equation*}
Now we trivially have that $N_\varepsilon \geq N_\varepsilon^I$, and hence we obtain
\begin{equation*}
   \mathbb{P}\left( N_\varepsilon < c_1 \varepsilon^\beta K /2\right) \leq \exp\left( - c_1 \varepsilon^\beta K / 8 \right) \,.
\end{equation*}
By constructing a good event based on the previous concentration bound and using that $\sum_{k=1}^K \Delta_k \leq K$, we obtain the first part of the Theorem.
The second part results from (i) the first exponential term dominates since $\varepsilon^{2+\beta} \leq \varepsilon^\beta$ for all $\varepsilon \in [0, 1]$ and $\beta > 0$ and (ii) the choice of $\varepsilon = \sqrt{3K/(2T)}$ which is the value that minimizes $4\varepsilon T + 6K / \varepsilon$.
\section{Proof of Theorem \ref{thm:many}}
\label{app:proof_many}

Once again, we just need a lower bound on the number of optimal arms in the subsampling and we construct a good event to do so.
We reuse the previous notation $N_\varepsilon$ to denote this value ($\varepsilon = 0$ here). 
Let $N_\varepsilon^S$ be the number of optimal arms with respect to all arms.
In the case of a subsampling of $K$ arms done without replacement, $N_\varepsilon^S$ is distributed according to a hypergeometric distribution.
By Hoeffding's inequality, see Lemma \ref{lemma:hoeffding}, we have for $0 < t < pK$,
\begin{equation*}
    \mathbb{P}\left(N_\varepsilon^S \leq (p-t)K\right) \leq \exp\left(-2t^2K\right)
\end{equation*}
where $p=m/n$.
We want to choose $t$ such $p-t > 0$, otherwise the bound is meaningless. 
In particular, the choice of $t=p/2$ yields
\begin{equation*}
   \mathbb{P}\left(N_\varepsilon^S \leq pK/2 \right) \leq \exp\left(-p^2 K/2\right) \,.
\end{equation*}
We then trivially have that $N_\varepsilon \geq N_\varepsilon^S$.
The regret on the bad events is then given by
\begin{equation*}
    T \left[ \exp\left( - pK\varepsilon^2 / 4\right) + \exp\left( - p^2 K/2\right) \right]
\end{equation*}
For this regret to be $\mathcal{O}(1)$, the two following inequalities must be verify: 
\begin{align*}
    pK \varepsilon^2/4 &\geq \log T \\
    p^2 K/2 &\geq \log T
\end{align*}
Now the term $3 \varepsilon T + \frac{6K}{\varepsilon}$ of Theorem \ref{thm:generic_bound_greedy} is minimized for $\varepsilon^2 = 2K/T$. This leads to
\begin{align*}
    pK^2/2 &\geq T \log T \\
    p^2 K/2 &\geq \log T
\end{align*}
Using that $p=T^{-\alpha}$, we obtain 
\begin{equation*}
   K \geq 2 \max\left\{ \sqrt{T^{1+\alpha}}, T^{2\alpha} \sqrt{\log T} \right\} \sqrt{\log T} \,.
\end{equation*}
The proof is concluded by decomposing according to the value inside the max term.

\section{Useful Results}

In this section, for the sake of completeness, we provide previous results used in our analysis together with a small lemma.

\begin{lemma}[Corollary 5.5 of \cite{lattimore2018bandit}]
Let $X_1, \ldots, X_n$ be $n$ independent $\sigma^2$-subgaussian random variables. Then for any $\varepsilon \geq 0$, it holds that
\begin{equation*}
    \mathbb{P}\left( \overline{X} \geq \varepsilon \right) \leq \exp\left( - \frac{n \varepsilon^2}{2 \sigma^2} \right)
\end{equation*}
where $\overline{X} = \frac{1}{n} \sum_{i=1}^n X_i$.
\label{lemma:concentration_bound_mean_subgauss}
\end{lemma}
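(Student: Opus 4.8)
The plan is to reduce the statement to the standard Chernoff argument for subgaussian random variables. Recall that a random variable $X$ is $\sigma^2$-subgaussian when its moment generating function satisfies $\mathbb{E}[\exp(\lambda X)] \leq \exp(\lambda^2 \sigma^2 / 2)$ for every $\lambda \in \mathbb{R}$. The first step is to control the moment generating function of the empirical mean $\overline{X}$: using the independence of $X_1,\dots,X_n$ together with the defining inequality applied to each $X_i$ at parameter $\lambda/n$,
\begin{equation*}
\mathbb{E}\!\left[\exp\!\left(\lambda\overline{X}\right)\right] = \prod_{i=1}^n \mathbb{E}\!\left[\exp\!\left(\frac{\lambda}{n}X_i\right)\right] \leq \prod_{i=1}^n \exp\!\left(\frac{\lambda^2\sigma^2}{2n^2}\right) = \exp\!\left(\frac{\lambda^2\sigma^2}{2n}\right),
\end{equation*}
so that $\overline{X}$ is itself $(\sigma^2/n)$-subgaussian.

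Next I would apply Markov's inequality to the nonnegative variable $\exp(\lambda\overline{X})$ for an arbitrary $\lambda > 0$: for any $\varepsilon \geq 0$,
\begin{equation*}
\mathbb{P}\!\left(\overline{X}\geq\varepsilon\right) = \mathbb{P}\!\left(\exp(\lambda\overline{X})\geq\exp(\lambda\varepsilon)\right) \leq \exp(-\lambda\varepsilon)\,\mathbb{E}\!\left[\exp(\lambda\overline{X})\right] \leq \exp\!\left(-\lambda\varepsilon + \frac{\lambda^2\sigma^2}{2n}\right).
\end{equation*}
The exponent on the right is a convex quadratic in $\lambda$, minimized at $\lambda^\star = n\varepsilon/\sigma^2$, which is an admissible value since $\varepsilon\geq 0$; substituting it yields exponent $-n\varepsilon^2/(2\sigma^2)$ and hence the claimed inequality. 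The corner case $\varepsilon = 0$ is trivially covered, the bound then reading $\mathbb{P}(\overline{X}\geq 0)\leq 1$.

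There is essentially no obstacle here: this is the classical subgaussian tail bound (it appears as Corollary~5.5 of \citet{lattimore2018bandit}), and the only steps requiring a word of justification are the stability of the subgaussian property under averaging of independent variables, handled by the moment-generating-function factorization above, and the check that the optimal parameter $\lambda^\star$ lies in the admissible region $\lambda>0$. One could alternatively just invoke the tensorization of subgaussianity together with the generic one-variable Chernoff inequality, but spelling out the two displays above keeps the argument self-contained.
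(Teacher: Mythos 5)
Your argument is correct and is exactly the classical Chernoff/Cram\'er proof of the subgaussian tail bound; the paper itself gives no proof of this lemma, simply citing Corollary~5.5 of \citet{lattimore2018bandit}, whose proof proceeds by the same two steps you spell out (tensorization of the MGF bound to show $\overline{X}$ is $\sigma^2/n$-subgaussian, then Markov's inequality on $\exp(\lambda\overline{X})$ with $\lambda = n\varepsilon/\sigma^2$). The only implicit point worth noting is that the MGF definition of subgaussianity you use entails $\mathbb{E}[X_i]=0$, which matches the convention of the cited reference and the way the lemma is applied in the paper (to centered quantities such as $\widehat{\mu}_k^t-\mu_k$).
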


\begin{lemma}[Lemma 2 of \cite{bayati2020optimal}]
Let $Q$ be a distribution with mean $\mu$ such that $Q - \mu$ is 1-subgaussian. Let $\{X_i\}_{i=1}^n$ be i.i.d.\ samples from distribution $Q$, $S_n = \sum_{i=1}^n X_i$ and $M_n = S_n / n$. Then for any $\delta > 0$, we have
\begin{equation*}
    \mathbb{P}\left( \exists n : M_n < \mu - \delta \right) \leq \exp\left(- \delta^2 / 2\right) \,.
\end{equation*}
\label{lemma:concentration_bound}
\end{lemma}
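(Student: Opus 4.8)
The plan is to establish this \emph{maximal} (uniform over all sample sizes $n$) lower-tail bound through an exponential supermartingale combined with Ville's maximal inequality for nonnegative supermartingales. First I would pass to centered partial sums: set $Y_i = X_i - \mu$ and $W_n = \sum_{i=1}^n Y_i$ with the convention $W_0 = 0$. Since $M_n < \mu - \delta$ is equivalent to $S_n - n\mu < -n\delta$, the target event is exactly $\{ \exists n \geq 1 : W_n < -n\delta \}$, so it suffices to bound $\mathbb{P}(\exists n \geq 1 : W_n < -n\delta)$. The $Y_i$ are i.i.d., centered, and $1$-subgaussian by hypothesis, which gives the moment generating bound $\mathbb{E}[\exp(-\delta Y_i)] \leq \exp(\delta^2/2)$ for the single negative parameter $-\delta$ that I will use throughout.

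Next I would introduce, for this fixed $\lambda = \delta$, the process $Z_n = \exp\left( -\delta W_n - n \delta^2/2 \right)$, with $Z_0 = 1$, and verify it is a nonnegative supermartingale with respect to the natural filtration $\mathcal{F}_n = \sigma(X_1, \dots, X_n)$. Indeed $Z_n = Z_{n-1} \exp(-\delta Y_n - \delta^2/2)$, and by independence together with the subgaussian bound above, $\mathbb{E}[\exp(-\delta Y_n - \delta^2/2)] \leq 1$, so $\mathbb{E}[Z_n \mid \mathcal{F}_{n-1}] \leq Z_{n-1}$. The crucial observation — and the reason the specific choice $\lambda = \delta$ matters — is how $Z_n$ behaves on the bad event: on $\{ W_n < -n\delta \}$ we have $-\delta W_n > n\delta^2$, hence $Z_n > \exp(n\delta^2 - n\delta^2/2) = \exp(n\delta^2/2) \geq \exp(\delta^2/2)$ for every $n \geq 1$. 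Thus the growing threshold $-n\delta$ is precisely compensated by the $-n\delta^2/2$ drift, producing a lower bound on $Z_n$ that is \emph{uniform} (indeed increasing) in $n$, which is what makes the maximal argument close.

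Finally I would combine these two facts. The inclusion just derived yields
\begin{equation*}
\{ \exists n \geq 1 : W_n < -n\delta \} \subseteq \Big\{ \sup_{n \geq 0} Z_n \geq \exp(\delta^2/2) \Big\} \,,
\end{equation*}
and Ville's inequality for the nonnegative supermartingale $Z_n$ gives $\mathbb{P}(\sup_{n \geq 0} Z_n \geq a) \leq \mathbb{E}[Z_0]/a = 1/a$, so taking $a = \exp(\delta^2/2)$ delivers $\mathbb{P}(\exists n : M_n < \mu - \delta) \leq \exp(-\delta^2/2)$, as claimed. The main obstacle is exactly the ``$\exists n$'' quantifier: a fixed-$n$ Chernoff bound (Lemma \ref{lemma:concentration_bound_mean_subgauss}) only gives $\exp(-n\delta^2/2)$, and a naive union bound over $n$ would sum a geometric series to $\exp(-\delta^2/2)/(1 - \exp(-\delta^2/2))$, which is strictly weaker than the stated constant. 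Replacing the union bound by the supermartingale maximal inequality, with the drift-calibrated choice $\lambda = \delta$, is what recovers the clean factor $\exp(-\delta^2/2)$.
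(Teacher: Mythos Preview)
Your argument is correct: the exponential supermartingale $Z_n = \exp(-\delta W_n - n\delta^2/2)$ together with Ville's maximal inequality is exactly the right tool here, and your choice $\lambda = \delta$ is the one that turns the moving boundary $-n\delta$ into a fixed threshold for $Z_n$. The inclusion and the application of Ville are both sound.

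There is nothing to compare against in the paper itself, however: the authors do not prove this lemma but simply quote it from \cite{bayati2020optimal} (it appears in the ``Useful Results'' appendix with a citation and no proof). So your proposal is not an alternative to the paper's proof --- it \emph{is} a proof where the paper offers none. For what it is worth, the supermartingale/Ville route you take is the standard one and is essentially how the result is established in the cited reference as well.
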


\begin{lemma}[Hoeffding's inequality]
Let $X_1, \ldots, X_n$ be independent bounded random variables supported in $[0, 1]$. For all $t \geq 0$, we have
\begin{equation*}
    \mathbb{P}\left( \frac{1}{n} \sum_{i=1}^n \left( X_i - \mathbb{E}[X_i] \right) \geq t \right) \leq \exp\left( - 2 n t^2 \right)
\end{equation*}
and 
\begin{equation*}
    \mathbb{P}\left( \frac{1}{n} \sum_{i=1}^n \left( X_i - \mathbb{E}[X_i] \right) \leq - t \right) \leq \exp\left( - 2 n t^2 \right) \,.
\end{equation*}
\label{lemma:hoeffding}
\end{lemma}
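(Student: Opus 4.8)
The plan is to prove both tail bounds via the exponential Chernoff method, the only substantial ingredient being a moment generating function estimate (Hoeffding's lemma) for a bounded, centered random variable. I will first treat the upper tail; the lower tail then follows by applying the same argument to the variables $-(X_i - \mathbb{E}[X_i])$, which are again centered and supported in an interval of width at most $1$.

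Setting $Y_i = X_i - \mathbb{E}[X_i]$, I note that the $Y_i$ are independent, mean zero, and each takes values in an interval $[a_i, b_i]$ with $b_i - a_i \leq 1$ (since $X_i \in [0,1]$). For any $s > 0$, Markov's inequality applied to the nonnegative variable $\exp\!\left( s \sum_i Y_i \right)$ would give
$$\mathbb{P}\left( \sum_{i=1}^n Y_i \geq nt \right) \leq e^{-snt} \prod_{i=1}^n \mathbb{E}\left[ e^{sY_i} \right],$$
where the product comes from independence, and the event on the left is exactly $\{ \tfrac{1}{n}\sum_i (X_i - \mathbb{E}[X_i]) \geq t \}$.

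The key step is to control each factor $\mathbb{E}[e^{sY_i}]$. Hoeffding's lemma asserts that for a centered random variable $Y$ supported in $[a,b]$ one has $\mathbb{E}[e^{sY}] \leq \exp\!\left( s^2 (b-a)^2 / 8 \right)$. I would prove this by writing $\psi(s) = \log \mathbb{E}[e^{sY}]$ and checking $\psi(0) = \psi'(0) = 0$, while $\psi''(s)$ equals the variance of $Y$ under the exponentially tilted law $\mathrm{d}P_s \propto e^{sy}\,\mathrm{d}P$; since $Y$ remains in $[a,b]$, this variance is at most $(b-a)^2/4$, and a second-order Taylor expansion of $\psi$ yields the bound. (A self-contained alternative uses convexity of $y \mapsto e^{sy}$ on $[a,b]$ together with $\mathbb{E}[Y]=0$.) With $b_i - a_i \leq 1$ this gives $\mathbb{E}[e^{sY_i}] \leq \exp(s^2/8)$, hence
$$\mathbb{P}\left( \frac{1}{n}\sum_{i=1}^n (X_i - \mathbb{E}[X_i]) \geq t \right) \leq \exp\left( -snt + \frac{n s^2}{8} \right).$$

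It then remains to optimize over $s > 0$: the exponent $-snt + ns^2/8$ is minimized at $s = 4t$, where it equals $-2nt^2$, giving the stated bound $\exp(-2nt^2)$. The lower tail is identical after replacing $Y_i$ by $-Y_i$. The main obstacle is Hoeffding's lemma, i.e.\ the moment generating function estimate for bounded variables; the remaining pieces — the Chernoff bound, the factorization over independent terms, and the scalar optimization in $s$ — are routine.
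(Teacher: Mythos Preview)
Your proof is correct and is the standard Chernoff--Hoeffding argument: apply Markov's inequality to $e^{s\sum Y_i}$, factorize by independence, bound each factor by Hoeffding's lemma $\mathbb{E}[e^{sY}]\leq e^{s^2(b-a)^2/8}$, and optimize in $s$. Nothing is missing.

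As for comparison with the paper: there is none to make. The paper does not give a proof of this lemma at all; it simply states Hoeffding's inequality as a classical result in the ``Useful Results'' appendix and uses it as a black box (for the hypergeometric concentration in the many-armed proof). Your write-up is therefore strictly more than what the paper provides.
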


\begin{lemma}
Let a and b be two real numbers. Then the following holds true
\begin{equation*}
    \floor*{a+b} - \ceil*{a-b} \geq \floor*{2b} - 1 \,.
\end{equation*}
\label{lemma:1}
\end{lemma}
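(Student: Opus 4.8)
The plan is to eliminate the ceiling in favour of a floor and then invoke the near-additivity of the floor function. The starting point is the elementary identity $\ceil*{z} = -\floor*{-z}$, valid for every real $z$, which I would apply with $z = a - b$ to rewrite
\begin{equation*}
\floor*{a+b} - \ceil*{a-b} = \floor*{a+b} + \floor*{b-a} \,.
\end{equation*}
This reduces the statement to a lower bound on a sum of two floors.

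Next I would recall the standard relation for the floor of a sum: for all reals $x, y$ one has $\floor*{x} + \floor*{y} \geq \floor*{x+y} - 1$. To justify it in one line, write $x = \floor*{x} + \{x\}$ and $y = \floor*{y} + \{y\}$ with fractional parts $\{x\}, \{y\} \in [0,1)$; since $\{x\} + \{y\} < 2$, the nonnegative integer $\floor*{x+y} - \floor*{x} - \floor*{y} = \floor*{\{x\}+\{y\}}$ equals $0$ or $1$, whence the claimed inequality. Applying this with $x = a+b$ and $y = b-a$, so that $x + y = 2b$, gives
\begin{equation*}
\floor*{a+b} + \floor*{b-a} \geq \floor*{2b} - 1 \,,
\end{equation*}
and combining with the first display yields the result.

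There is essentially no obstacle here, since both ingredients are elementary facts about the floor and ceiling. The only point requiring a moment of care is the sign bookkeeping when converting $\ceil*{a-b}$ into a floor: one has $-\ceil*{a-b} = \floor*{-(a-b)} = \floor*{b-a}$, and it is precisely this that makes the two floor arguments $a+b$ and $b-a$ sum to exactly $2b$, matching the right-hand side. I would double-check that step so that the reduction is airtight, after which the conclusion is immediate.
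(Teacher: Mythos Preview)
Your proof is correct and follows essentially the same approach as the paper: both convert the ceiling to a floor via $\ceil*{z} = -\floor*{-z}$ and then apply the near-additivity inequality $\floor*{x} + \floor*{y} \geq \floor*{x+y} - 1$ with $x = a+b$ and $y = b-a$.
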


\begin{proof}
We have
\begin{align*}
    \floor*{a+b} - \ceil*{a-b} &= \floor*{a+b} + \floor*{b-a} \\
    &\geq \floor*{a+b + b - a} - 1\\
    &= \floor*{2b} - 1 \\
\end{align*}
where we used respectively that, $\ceil*{x} = - \floor*{-x}$ and $\floor*{x+y} \leq \floor*{x} + \floor*{y} + 1$.
\end{proof}

\section{Further experiments}

In this section, we evaluate the standard \Greedy{} algorithm, that considers all arms, in several bandit models to once again highlight its competitive performance in some cases compared to the state-of-the-art.

\subsection{Linear bandits}
\label{app:exp_linear}

%Linear bandits are one of the most studied bandit framework.
In the linear bandit model, for each round $t$, the learner is given the decision set $\mathcal{A}_t \subset \mathbb{R}^d$, from which she chooses an action $A_t \in \mathcal{A}_t$ and receives reward $X_t=\langle \theta_\star, A_t\rangle + \eta_t$, where $\theta_\star \in \mathbb{R}^d$ is an unknown parameter vector and $\eta_t$ is some i.i.d.\ white noise, usually assume 1-subgaussian.
In this model, the \Greedy{} algorithm consists of two phases: firstly, it computes the regularized least-square estimator of $\theta$; then, it plays the arm in the action set that maximizes the linear product with the estimator of $\theta$.

Here we consider a problem with a large dimension relatively to the time horizon. Precisely, we fix $d=50$, a time horizon $T=2500$ and the noise is a standard Gaussian distribution. The set of arms consists of the unit ball and the parameter $\theta$ is randomly generated on the unit sphere.
We compare \Greedy{} with \textsc{LinUCB} \citep{abbasi2011improved} and \textsc{BallExplore} \citep{deshpande2012linear}, an algorithm specifically designed for such a setting. The regularization term $\lambda$ is set at 1 for \Greedy{} and \textsc{LinUCB}, the confidence term $\delta=\frac{1}{T}$ for \textsc{LinUCB} and the parameter $\Delta=d$ for BallExplore.
Results, displayed on Figure \ref{fig:exp_linear_unite_ball}, are averaged over 50 iterations. Shaded area represents 2 times the standard deviation for each algorithm.

\begin{figure}
    \centering
    \includegraphics[width=0.5\linewidth]{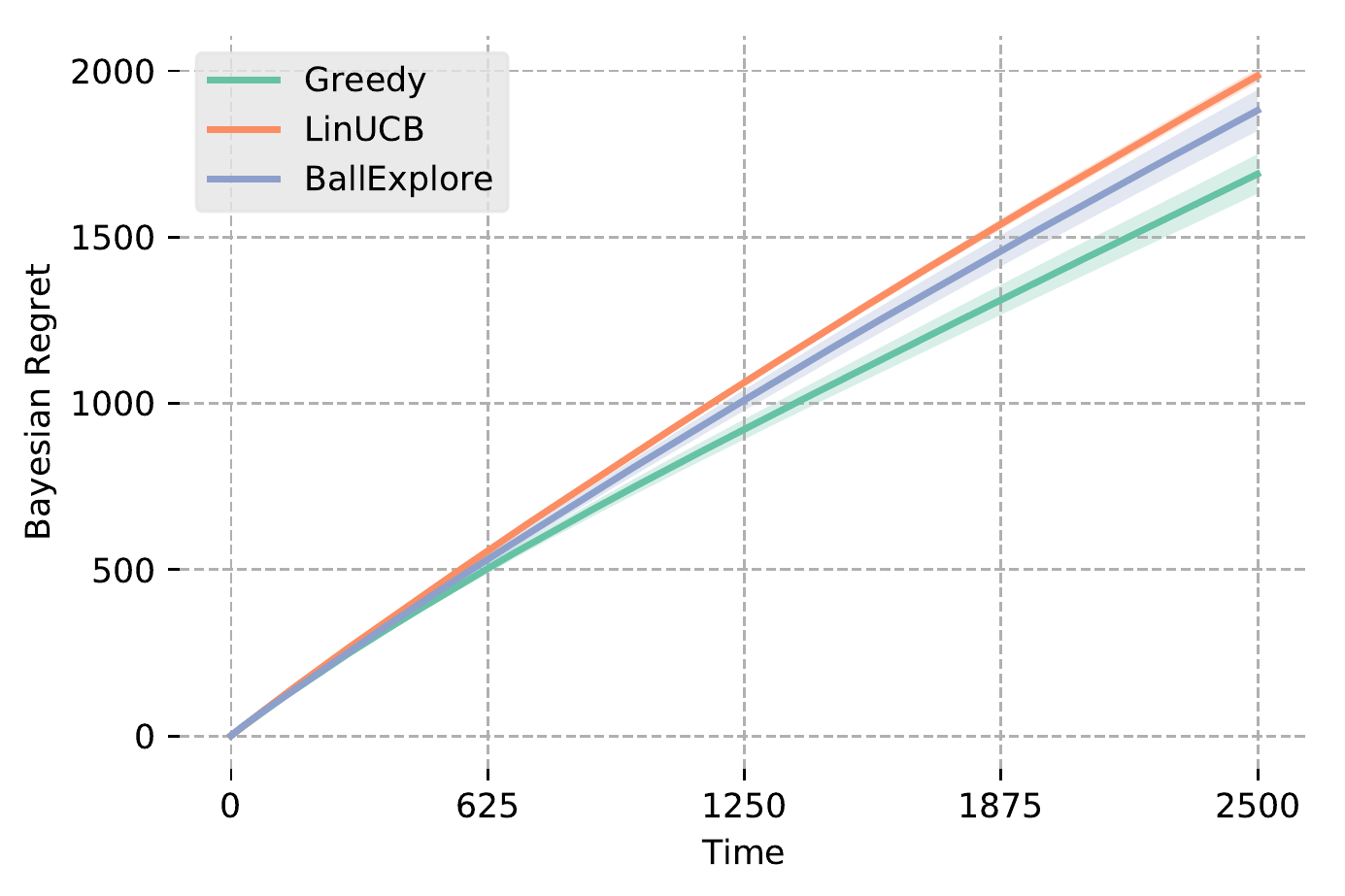}
    \caption{Bayesian regret of various algorithms as a function of time in a linear bandit problem.}
    \label{fig:exp_linear_unite_ball}
\end{figure}

We see that \Greedy{} outperforms both \textsc{LinUCB} and \textsc{BallExplore}; in particular the regret of \Greedy{} is sublinear.
Another point that we have not emphasized so far is the computational complexity. Until now, the difference in terms of computation was rather insignificant. This is no longer the case for algorithms designed for linear bandits as they must solve an optimization problem at each round. For example, in this simulation, the iteration time on a single-core processor is 70 seconds for \Greedy{}, 678 sec.\ for \textsc{LinUCB} and 1031 sec.\ for \textsc{BallExplore}. In words, \Greedy{} is roughly ten times faster than \textsc{LinUCB} and fifteen times faster than \textsc{BallExplore}.

\subsection{Cascading bandits}

We now consider a special, but popular, case of stochastic combinatorial optimization under semi-bandit feedback called the cascading bandit problem. 
%The cascade model is particularly popular in display advertisements. 
Formally, there are $L \in \mathbb{N}$ ground items and at each round $t$, the agent recommends a list $\left( a_1^t, \dots, a_K^t \right)$ of $K \leq L$ items to the user. The user examines the list, from the first item to the last, and clicks on the first attractive item, if any. A weight $w(l) \in [0,1]$ is associated to each  item $l \in [L]$, which denotes the click probability of the item. The reward of the agent at round $t$ is given by $1 - \prod_{K=1}^K \left( 1 - w(a_k^t) \right) \in \{ 0, 1\}$ and she receives feedback for each $k\in [K]$ such that $k \leq c_t = \min\left\{1 \leq k \leq K : w_t(a_k^t) = 1\right\}$ where $w_t(a_k^t) \sim \text{Bernoulli}(w(a_k^t))$ and we assume that the minimum over an empty set is $\infty$.
In this setting, the \Greedy{} algorithm outputs a list consisting of the $K$ best empirical arms. 
The goal of these experiments is to study in which regimes, as a function of $L$ and $K$, the \Greedy{} algorithm might be preferable to the state-of-the-art.

We reproduce the experiments of \citet{kveton2015cascading} in the Bayesian setting.
We compare \Greedy{} with \textsc{CascadeKL-UCB} \citep{kveton2015cascading} and \textsc{TS-Cascade} \citep{cheung2019thompson}. \Greedy{} and \textsc{CascadeKL-UCB} share the same initialization which is to select each item once as the first item on the list. For each algorithm, the list is ordered from the largest index to the smallest one. We consider two scenarios: on the first one, the prior on the mean rewards is a uniform distribution over $[0, 1]$ while on the second scenario, we consider a more realistic Beta(1, 3) distribution so that most arms have low mean rewards. The time horizon is set at $T=10000$. The regret and standard deviation of each algorithm, averaged over 100 iterations, are reported in Table \ref{tab:cascade_unif} and \ref{tab:cascade_beta} for different values of $L$ and $K$.

\begin{table}
  \caption{Bayesian regret of various algorithms in cascading bandit problems with a uniform prior.}
  \label{tab:cascade_unif}
  \centering
  \begin{tabular}{cc|ccc}
    \toprule
    L & K & \Greedy{} & \textsc{CascadeKL-UCB} & \textsc{TS-Cascade} \\
    \midrule
    16 & 2 & 176.1 $\pm$ 26.4 & \textbf{48.1 $\pm$ 2.7} & 109.7 $\pm$ 1.8 \\
    16 & 4 & 10.2 $\pm$ 1.9 & \textbf{9.9 $\pm$ 1.0} & 28.4 $\pm$ 0.9 \\
    16 & 8 & \textbf{0.7 $\pm$ 0.2} & \textbf{0.7 $\pm$ 0.1} & 3.6 $\pm$ 0.3 \\
    32 & 2 & 166.1 $\pm$ 22.8 & \textbf{58.7 $\pm$ 3.5} & 178.7 $\pm$ 2.5 \\
    32 & 4 & \textbf{6.7 $\pm$ 0.9} & 10.1 $\pm$ 0.8 & 47.0 $\pm$ 1.0 \\
    32 & 8 & \textbf{0.2 $\pm$ 0.03} & 0.7 $\pm$ 0.08 & 8.3 $\pm$ 0.4 \\
    64 & 2 & 135.5 $\pm$ 15.6 & \textbf{76.6 $\pm$ 3.7} & 288.6 $\pm$ 2.6 \\
    64 & 4 & \textbf{6.5 $\pm$ 0.5} & 12.5 $\pm$ 0.6 & 80.3 $\pm$ 1.3 \\
    64 & 8 & \textbf{0.3 $\pm$ 0.02} & 0.9 $\pm$ 0.07 & 16.6 $\pm$ 0.5 \\
    128 & 2 & 133.1 $\pm$ 12.4 & \textbf{107.4 $\pm$ 4.8} & 442.6 $\pm$ 3.4 \\
    128 & 4 & \textbf{9.4 $\pm$ 0.3} & 18.0 $\pm$ 0.8 & 127.4 $\pm$ 1.5 \\
    128 & 8 & \textbf{0.5 $\pm$ 0.02} & 1.5 $\pm$ 0.1 & 27.9 $\pm$ 0.6 \\
    256 & 2 & \textbf{137.2 $\pm$ 10.6} & 151.0 $\pm$ 5.6 & 605.7 $\pm$ 3.1 \\
    256 & 4 & \textbf{16.6 $\pm$ 0.2} & 26.9 $\pm$ 1.0 & 179.5 $\pm$ 1.4 \\
    
    256 & 8 & \textbf{1.0 $\pm$ 0.03} & 1.8 $\pm$ 0.1 & 39.9 $\pm$ 0.5 \\
    \bottomrule
  \end{tabular}
\end{table}

\begin{table}
  \caption{Bayesian regret of various algorithms in cascading bandit problems with a Beta(1, 3) prior.}
  \label{tab:cascade_beta}
  \centering
  \begin{tabular}{cc|ccc}
    \toprule
    L & K & \Greedy{} & \textsc{CascadeKL-UCB} & \textsc{TS-Cascade} \\
    \midrule
    16 & 2 & 590.4 $\pm$ 83.5 & 207.9 $\pm$ 5.2 & \textbf{199.5 $\pm$ 3.6} \\
    16 & 4 & 304.8 $\pm$ 35.7 & 116.4 $\pm$ 4.2 & \textbf{103.2 $\pm$ 2.9} \\
    16 & 8 & 97.9 $\pm$ 11.7 & 39.6 $\pm$ 2.1 & \textbf{34.4 $\pm$ 1.6} \\
    32 & 2 & 433.1 $\pm$ 49.1 & \textbf{330.7 $\pm$ 8.3} & 333.7 $\pm$ 3.8 \\
    32 & 4 & 192.2 $\pm$ 23.1 & 166.2 $\pm$ 6.0 & \textbf{163.3 $\pm$ 3.7} \\
    32 & 8 & \textbf{38.7 $\pm$ 5.3} & 50.1 $\pm$ 2.9 & 54.6 $\pm$ 1.9 \\
    64 & 2 & 576.2 $\pm$ 55.8 & \textbf{485.8 $\pm$ 11.2} & 540.1 $\pm$ 4.8 \\
    64 & 4 & \textbf{144.2 $\pm$ 12.3} & 207.5 $\pm$ 6.8 & 246.1 $\pm$ 4.1 \\
    64 & 8 & \textbf{20.3 $\pm$ 1.8} & 49.2 $\pm$ 2.2 & 76.4 $\pm$ 1.6 \\
    128 & 2 & \textbf{575.2 $\pm$ 40.1} & 710.9 $\pm$ 16.3 & 843.4 $\pm$ 4.7 \\
    128 & 4 & \textbf{100.8 $\pm$ 5.5} & 270.6 $\pm$ 7.4 & 372.9 $\pm$ 3.7 \\
    128 & 8 & \textbf{18.0 $\pm$ 0.6} & 60.7 $\pm$ 2.0 & 115.7 $\pm$ 1.4 \\
    256 & 2 & \textbf{522.5 $\pm$ 32.4} & 1068.3 $\pm$ 26.1 & 1235.1 $\pm$ 6.3 \\
    256 & 4 & \textbf{125.1 $\pm$ 3.8} & 380.0 $\pm$ 10.3 & 551.1 $\pm$ 3.85 \\
    256 & 8 & \textbf{27.3 $\pm$ 0.4} & 86.4 $\pm$ 2.6 & 174.8 $\pm$ 1.5 \\
    \bottomrule
  \end{tabular}
\end{table}

As expected by the Bayesian setting, \Greedy{} outplays the state-of-the-art when the number of arms $L$ is large. 
Even more interesting is that, as the number of recommended items $K$ gets larger the regret of \Greedy{} decreases at a faster rate than the other algorithms. Our intuition is that the conservatism of standard bandit algorithms is amplified as $K$ increases and this is further exacerbated by the cascade model where items at the bottom of the list may not get a feedback. On the contrary, the \Greedy{} algorithm quickly converges to a solution that uniquely depends on past individual performances of arms. 
In addition, the contrast between the performance of \Greedy{} and the state-of-the-art is even more striking in the second scenario. This is not particularly surprising as the Beta(1, 3) distribution gives rise to harder problems for the considered time horizon.

\subsection{Mortal bandits}

We now consider the mortal bandit problem where arms die and new ones appear regularly (in particular, an arm is not always available contrary to the standard model). In this setting, the \Greedy{} algorithm pulls the best empirical arm available. As previous work considered a large number of arms, state-of-the-art algorithms in this setting, e.g. \textsc{AdaptiveGreedy} \citep{chakrabarti2009mortal}, emphasis an hidden subsampling of arms due to their initialization. They further required a careful (manual) tuning of their parameter for optimal performance. Consequently, we compare \Greedy{} to a standard bandit algorithm extended to this model and we consider a small number of arms. Similarly to the last setting, the goal is to observe in which regimes, as a function of the mean lifetime of arms, \Greedy{} might be preferable.

We repeat the experiments of \citet{chakrabarti2009mortal} with $K=100$ arms. The number of arms remains fixed throughout the time horizon $T$, that is when an arm dies, it is immediately replaced by another one. The time horizon $T$ is set at 10 times the mean lifetime of the arms. The lifetime of arm $k$, denoted $L_k$, is drawn i.i.d.\ from a geometric distribution with mean lifetime $L$; this arm dies after being available for $L_k$ rounds. 
We consider logarithmically spaced values of mean lifetimes.
We also assume that arms are Bernoulli random variables. We consider two scenarios: in the first one, mean rewards of arms are drawn i.i.d.\ from a uniform distribution over [0, 1], while in the second scenario they are drawn from a Beta(1, 3) distribution. We compare the \Greedy{} algorithm with \textsc{Thompson Sampling} \citep{agrawal2012analysis}. Results are averaged over 100 iterations and are reported on Figure \ref{fig:exp_mortal}. Shaded area represents 0.5 standard deviation for each algorithm.

\begin{figure}
  \floatconts
  {fig:exp_mortal}
  {\caption{Bayesian regret of various algorithms as a function of the expected lifetime of arms in mortal bandit problems.}}
  {%
    \subfigure[Uniform prior]{%\label{fig:circle}%
      \includegraphics[width=0.49\linewidth]{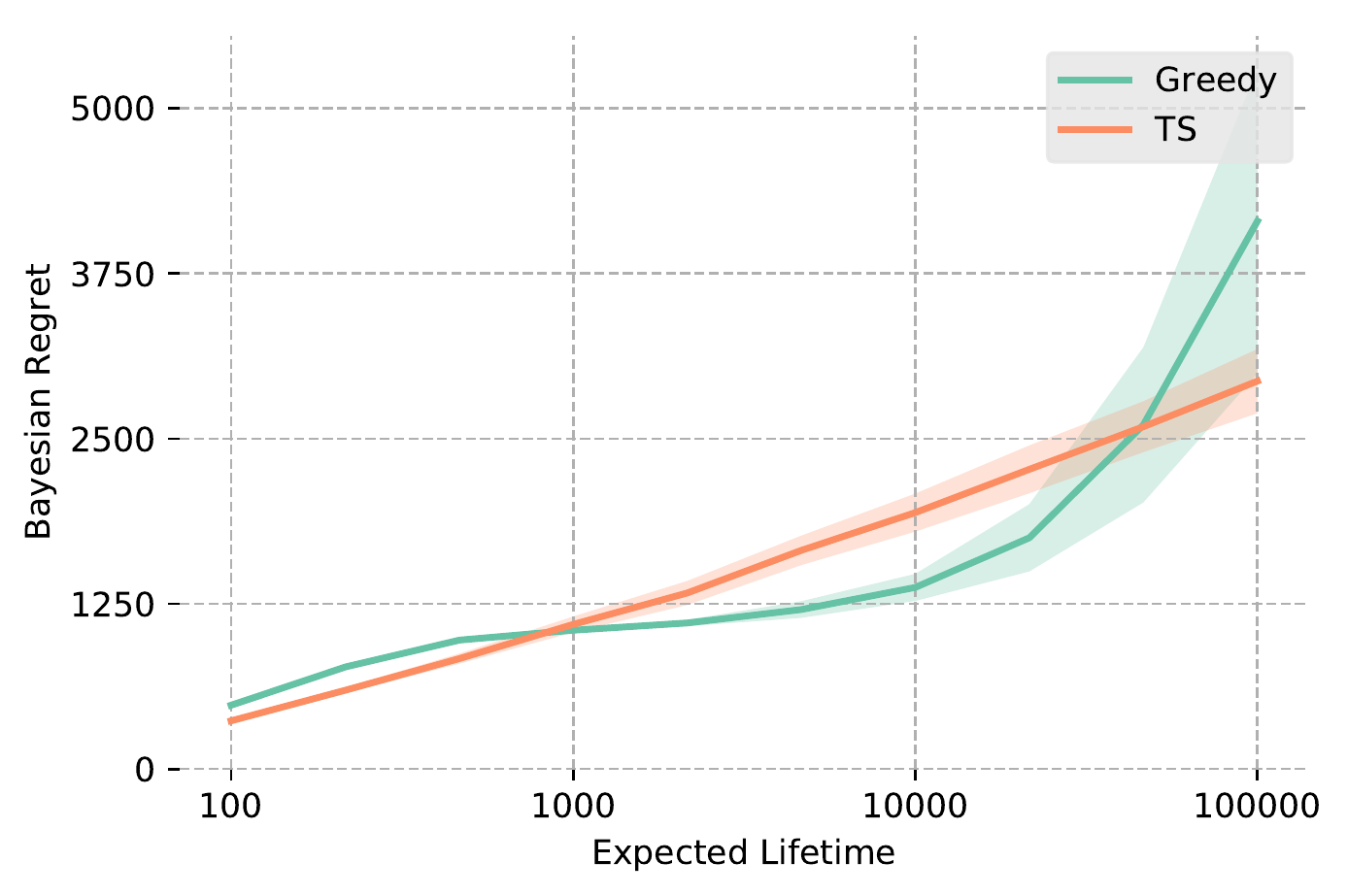}}%
    %\qquad
    \subfigure[Beta(1, 3) prior]{%\label{fig:square}%
      \includegraphics[width=0.49\linewidth]{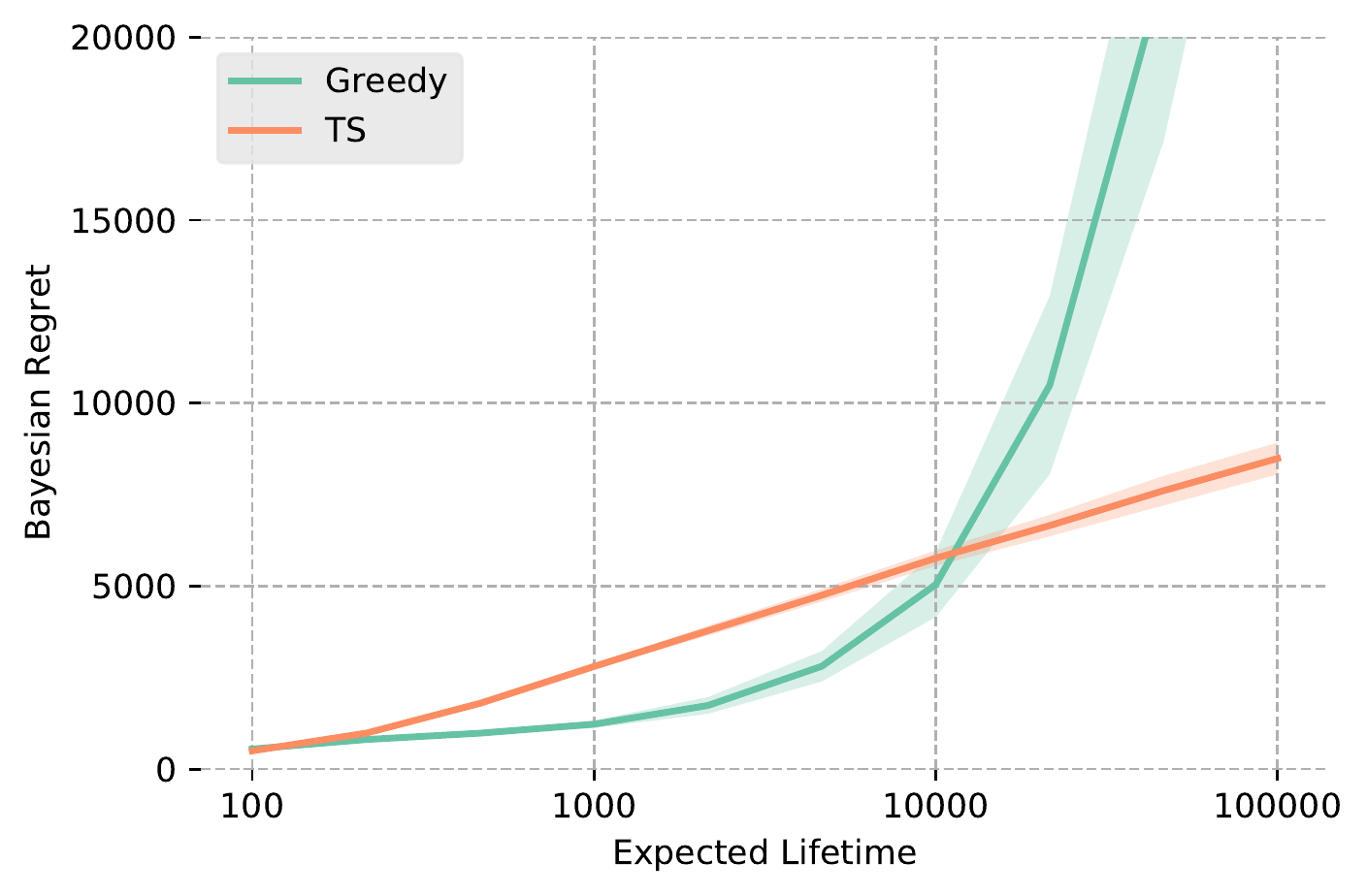}}
  }
\end{figure}

As expected, \Greedy{} outperforms \textsc{Thompson Sampling} for intermediate expected lifetime and vice versa for long lifetime. And for short lifetime, as we previously saw, a sub-sampling of arms could have considerably improve the performance of both algorithms.  

\subsection{Budgeted bandits}

We now consider the budgeted bandit problem. In this model, the pull of arm $k$ at round $t$ entails a random cost $c_k(t)$. Moreover, the learner has a budget $B$, which is a known parameter, that will constrain the total number of pulls.
In this setting, the index of an arm in the \Greedy{} algorithm is the average reward divided by the average cost.
Like before, the objective is to evaluate in which regimes with respect to the budget $B$, \Greedy{} might be preferable to a state-of-the-art algorithm.

We reproduce the experiments of \citet{xia2016budgeted}. Specifically, we study two scenarios with $K=100$ arms in each. The first scenario considers discrete costs; both the reward and the cost are sampled from Bernoulli distributions with parameters randomly sampled from $(0, 1)$. The second scenario considers continuous costs; the reward and cost of an arm is sampled from two different Beta distributions, the two parameters of each distribution are uniformly sampled from $[1,5]$. The budget is chosen from the set $\{100, 500, 1000,5000,10000 \}$.
We compare \Greedy{} to \textsc{Budget-UCB} \citep{xia2016budgeted} and \textsc{BTS} \citep{xia2015thompson}. 
The results of simulations are displayed in Figure \ref{fig:exp_budgeted} and are averaged over 500 runs. Shaded area represents 0.5 standard deviation for each algorithm.

\begin{figure}
  \floatconts
  {fig:exp_budgeted}
  {\caption{Regret of various algorithms as a function of the budget in budgeted bandit problems.}}
  {%
    \subfigure[Discrete costs]{%\label{fig:circle}%
      \includegraphics[width=0.49\linewidth]{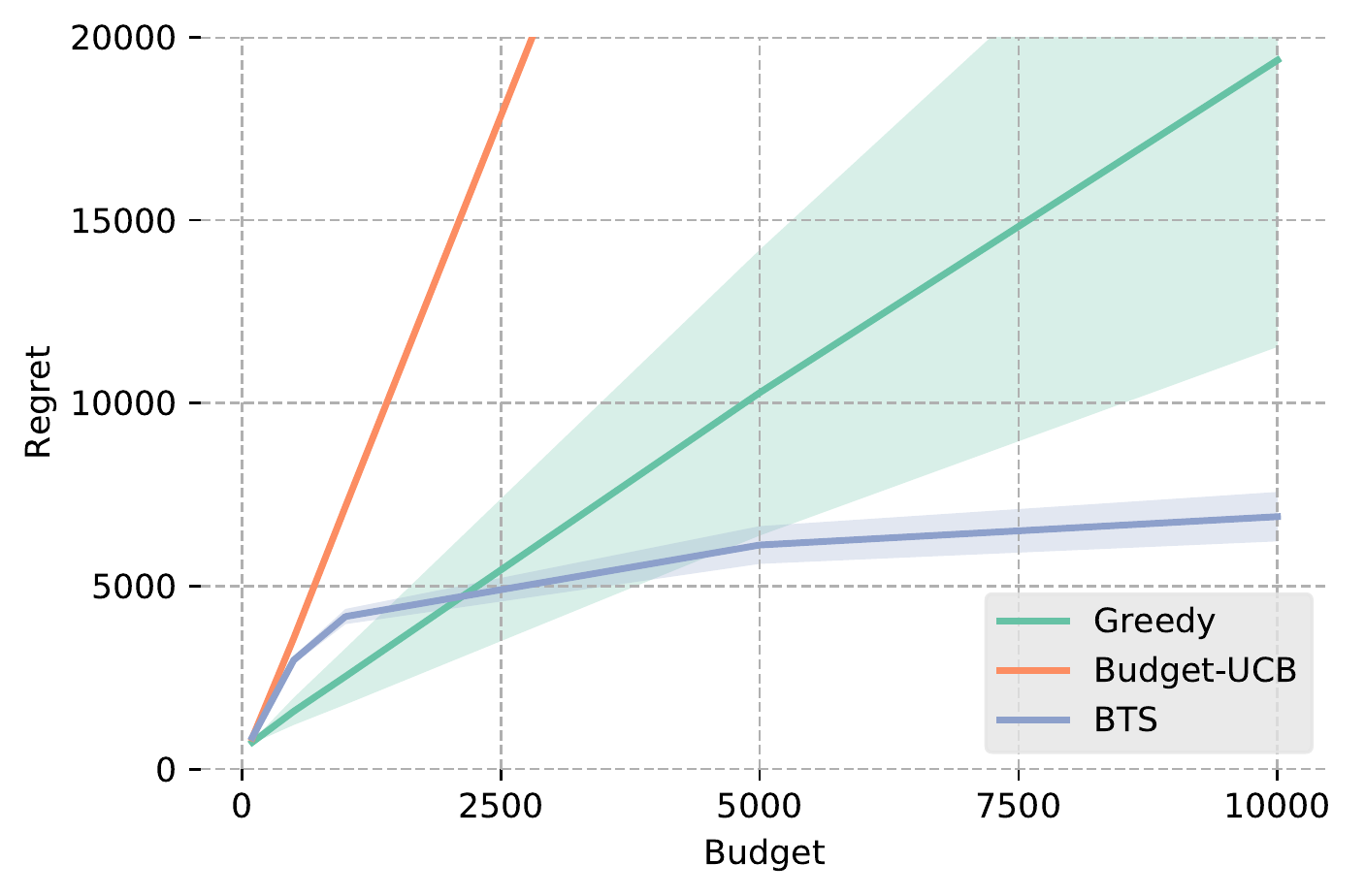}}%
    %\qquad
    \subfigure[Continuous costs]{%\label{fig:square}%
      \includegraphics[width=0.49\linewidth]{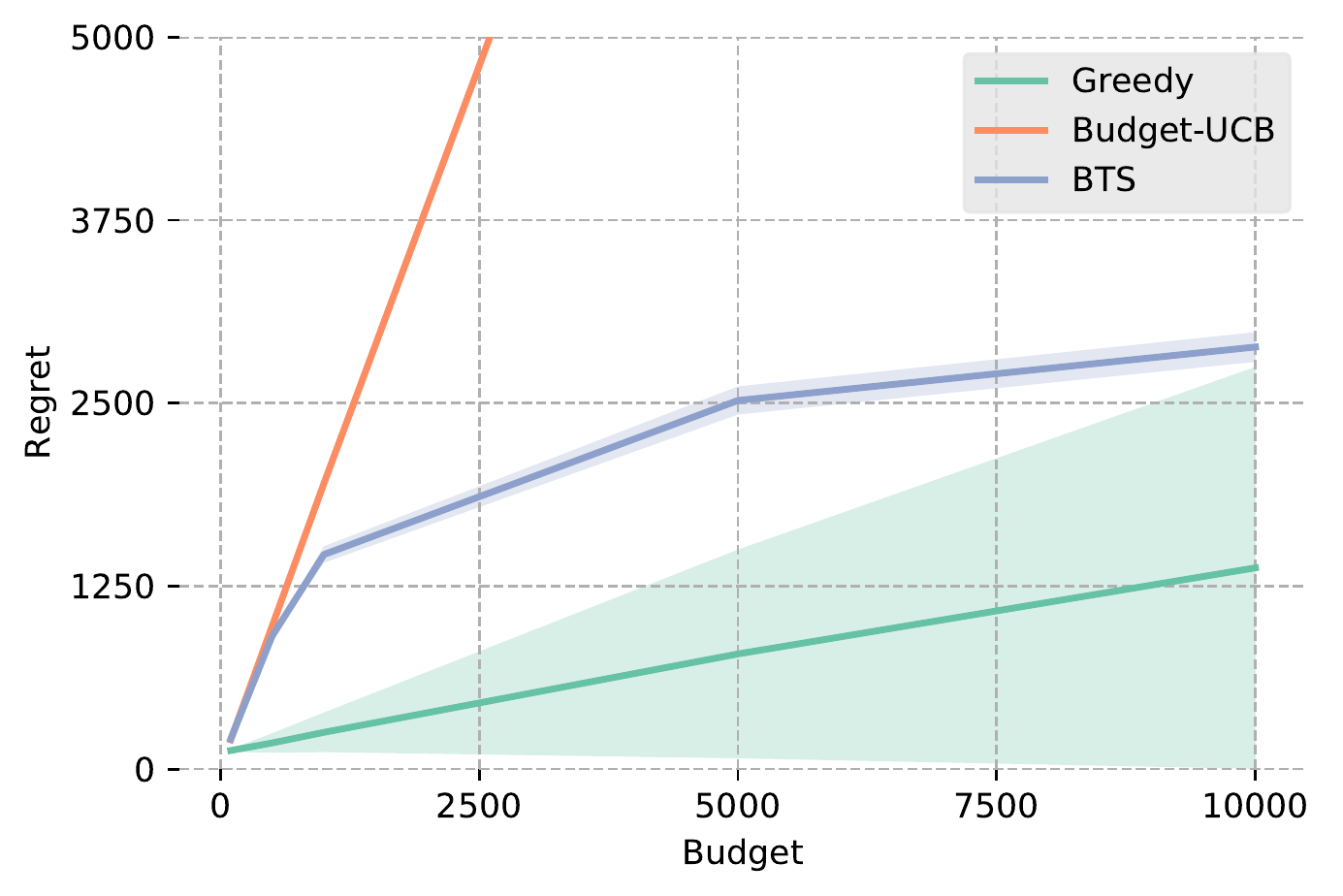}}
  }
\end{figure}

Interestingly, in this setting the interval of budgets for which \Greedy{} outperforms baseline algorithms is extremely small for discrete costs and large for continuous costs. In the latter case, even for large budget \Greedy{} has a lower expected regret than \textsc{BTS}. Nonetheless it suffers from a huge variance which makes its use risky in practice.

\end{document}